\documentclass[letterpaper, 12pt]{article} 
\usepackage[margin=1in]{geometry}
\newcounter{somecounter}
\setcounter{somecounter}{1}
\usepackage{booktabs, array}
\usepackage{setspace}
\usepackage{float}
\usepackage{epstopdf}
\floatplacement{figure}{H}
\usepackage{standalone}
\usepackage{color}
\usepackage{xcolor}
\usepackage{bm}
\usepackage{amsmath,amsfonts,amssymb}
\usepackage{amsthm}

\usepackage{pdflscape}
\usepackage{authblk}
\usepackage{graphicx}
\usepackage[colorlinks=true,citecolor=black,linkcolor=black,hidelinks]{hyperref}
\usepackage[normalem]{ulem}
\usepackage{multirow}
\usepackage{multicol}
\usepackage[flushleft]{threeparttable}
\usepackage{enumitem}
\usepackage{url}
\usepackage[utf8]{inputenc}
\usepackage{amsmath}
\usepackage{amsfonts}
\usepackage{amsthm}

\usepackage{lipsum}
\makeatletter
\renewenvironment{proof}[1][\proofname]{\par
  \pushQED{\qed}%
  \normalfont \topsep6\p@\@plus6\p@\relax
  \list{}{\leftmargin=1em
          \rightmargin=\leftmargin
          \settowidth{\itemindent}{\itshape#1}%
          \labelwidth=\itemindent
          \parsep=0pt \listparindent=\parindent 
  }
  \item[\hskip\labelsep
        \itshape
    #1\@addpunct{.}]\ignorespaces
}{%
  \popQED\endlist\@endpefalse
}
\makeatother
\usepackage{natbib}

\newtheorem{theorem}{Theorem}    
 
\newtheorem{assumption}{Assumption}


\usepackage{mathtools}

\newcommand{\interior}[1]{%
 {\kern0pt#1}^{\mathrm{o}}%
}
\usepackage[english]{babel}
\usepackage[autostyle, english = american]{csquotes}
\MakeOuterQuote{"}

\usepackage{algorithm,algpseudocode}
\usepackage{caption}
\usepackage{subcaption}
\captionsetup[algorithm]{font=normalsize}

\usepackage[makeroom]{cancel}

\makeatletter
\newcommand*\bigcdot{\mathpalette\bigcdot@{.5}}
\newcommand*\bigcdot@[2]{\mathbin{\vcenter{\hbox{\scalebox{#2}{$\m@th#1\bullet$}}}}}
\makeatother

\title{\textbf{A Stein Gradient Descent Approach for Doubly Intractable Distributions}}

\author[1]{Heesang Lee}
\author[1]{Songhee Kim}
\author[3]{Bokgyeong Kang}
\author[1,2]{Jaewoo Park}
\affil[1]{Department of Statistics and Data Science, Yonsei University}
\affil[2]{Department of Applied Statistics, Yonsei University}
\affil[3]{Department of Statistics, Dongguk University}

\begin{document}

\def\spacingset#1{\renewcommand{\baselinestretch}%
{#1}\small\normalsize} \spacingset{1}

\maketitle

\begin{abstract}
Bayesian inference for doubly intractable distributions is challenging because they include intractable terms, which are functions of parameters of interest. Although several alternatives have been developed for such models, they are computationally intensive due to repeated auxiliary variable simulations. We propose a novel Monte Carlo Stein variational gradient descent (MC-SVGD) approach for inference for doubly intractable distributions. Through an efficient gradient approximation, our MC-SVGD approach rapidly transforms an arbitrary reference distribution to approximate the posterior distribution of interest, without necessitating any predefined variational distribution class for the posterior. Such a transport map is obtained by minimizing Kullback-Leibler divergence between the transformed and posterior distributions in a reproducing kernel Hilbert space (RKHS). We also investigate the convergence rate of the proposed method. We illustrate the application of the method to challenging examples, including a Potts model, an exponential random graph model, and a Conway--Maxwell--Poisson regression model. The proposed method achieves substantial computational gains over existing algorithms, while providing comparable inferential performance for the posterior distributions.
\end{abstract}

\noindent%
{\it Keywords:} importance sampling, kernel Stein discrepancy, Markov chain Monte Carlo, variational inference, finite particle approximation 
\vfill

\newpage
\spacingset{1.8} 

\section{Introduction}
\label{sec:1}

Models with intractable normalizing functions arise in a wide variety of statistical models including Potts models \citep{potts1952some} for discrete spatial lattice data, exponential random graph models (ERGMs) \citep{robins2007introduction,park2022bayesian} for network data, and Conway--Maxwell--Poisson regression models \citep{conway1962network,chanialidis2018efficient,Kang2024comp} for count data exhibiting under- and overdispersion. Let $\mathbf{x} \in \mathcal{X}$ denote data generated from a probability model $f(\textbf{x} | \boldsymbol{\theta})$ for a parameter $\boldsymbol{\theta} \in \mathbf{\Theta}\subset \mathbb{R}^d$ with a likelihood function $L(\boldsymbol{\theta}|\textbf{x}) = h(\textbf{x}|\boldsymbol{\theta})/Z(\boldsymbol{\theta})$, where $Z(\boldsymbol{\theta})$ is an intractable normalizing function. The posterior density of $\boldsymbol{\theta}$ is $\pi(\boldsymbol{\theta}) \propto p(\boldsymbol{\theta})h(\mathbf{x}|\boldsymbol{\theta})/Z(\boldsymbol{\theta})$ where $p(\boldsymbol{\theta})$ is a prior density. It is challenging to apply standard Markov chain Monte Carlo (MCMC) algorithms to such models since $Z(\boldsymbol{\theta})$ cannot be analytically evaluated, resulting in so-called doubly intractable posterior distributions. 

Computational approaches have been developed for Bayesian inference for doubly intractable distributions \citep[see][for review]{park2018bayesian}. They either approximate $Z(\boldsymbol{\theta})$ \citep[cf.][]{atchade2013bayesian,lyne2015russian,park2020function} or avoid evaluation of $Z(\boldsymbol{\theta})$ by introducing auxiliary variables \citep[cf.][]{murray2006,liang2010double,liang2016adaptive}. However, these algorithms require repeated simulations of auxiliary datasets from the probability model, which can be computationally intensive \citep{park2018bayesian}. More recent works have been proposed to expedite inference for doubly intractable distributions. \cite{boland2018efficient} developed a fast pre-computing MCMC algorithm that utilizes Monte Carlo samples simulated at a grid over $\bm{\Theta}$ prior to the implementation of the algorithm. \cite{park2020function} replaced expensive importance sampling estimation of $Z(\boldsymbol{\theta})$ with Gaussian process emulation. 
\cite{10.1214/18-BA1130} introduced a deterministic function, dependent on the sufficient statistics of the model, to replace the intractable likelihood function.
\cite{vu2023warped} replaced the likelihood function with a warped gradient-enhanced Gaussian process surrogate model. Although these approaches can greatly enhance computational efficiency, their performance is heavily dependent on the similarity between the true models and their surrogates. Besides, it is important to choose appropriate design points for constructing effective surrogate models, which is generally challenging for multidimensional $\mathbf{\Theta}$. \cite{tan2020bayesian} proposed variational inference methods for ERGMs, where posterior distributions are approximated by Gaussian distributions. They optimize the evidence lower bound (ELBO) through the stochastic gradient ascent, where the gradients are replaced with their consistent estimates. While the methods can be extremely fast, they require considerable tuning to achieve optimal performance. For example, they require the adjusted maximum pseudolikelihood estimate (MPLE) \citep{bouranis2018bayesian} or Markov chain Monte Carlo maximum likelihood estimate (MCMC-MLE) \citep{geyer1992constrained} as inputs. Furthermore, they are designed exclusively for ERGMs and cannot be easily extended to other models. 

In this manuscript, we propose a Stein variational gradient descent (SVGD) approach for doubly intractable distributions and provide its theoretical justification. The SVGD method introduced by \citet{liu2016stein} is a variational inference approach that transforms an arbitrary reference distribution to a target distribution, without the need to assume any specific variational distribution class for the target. Due to its flexibility and computational efficiency, the SVGD approach has been extensively applied in the deep learning literature \citep{pu2017vae,haarnoja2017reinforcement,jaini2021learning,kim2023density}. 
However, this approach is not suitable for doubly intractable distributions, as it necessitates evaluating the gradient of the target posterior distribution. To address this limitation, we develop a Monte Carlo SVGD (MC-SVGD) method that efficiently approximates the intractable gradient throughout the process. We introduce several computational techniques to maintain the computational efficiency of the original SVGD while extending its applicability to doubly intractable distributions. These innovations ensure that the method remains both computationally efficient and adaptable to more complex models.
Furthermore, we derive a finite particle convergence rate for MC-SVGD. Specifically, when the target posterior $\pi(\boldsymbol{\theta})$ is sub-Gaussian with a Lipschitz score, our MC-SVGD method, using $n$ particles and $m$ Monte Carlo samples, achieves the kernel Stein discrepancy (KSD) convergence to zero at a rate of $\mathcal{O}((\log \log (n))^{-1/2}) + \mathcal{O}(m^{-1/2})$ rate. To the best of our knowledge, this is the first attempt to investigate the convergence rate of SVGD when the target distribution is intractable. 

The remainder of this manuscript is organized as follows.  In Section~\ref{sec:2}, we provide a brief overview of the SVGD method introduced by \citet{liu2016stein}. In Section~\ref{sec:3} we present our MC-SVGD approach, which extends its applicability to doubly intractable distributions. We offer a theoretical justification for the proposed method. In Section~\ref{sec:4}, we demonstrate the application of our method to three challenging examples and investigate its performance through comparisons with existing techniques. We conclude in Section~\ref{sec:5} with a comprehensive summary of our findings, along with a discussion of potential future research directions.

\section{Background}
\label{sec:2}

In this section, we introduce notations used throughout this manuscript. We briefly explain a Stein variational gradient descent introduced by \citet{liu2016stein}, which is a flexible variational inference method for approximating target distributions. 

\subsection{Notation}
\label{sec:notation}

Let $k : \mathbf{\Theta} \times \mathbf{\Theta} \rightarrow \mathbb{R}$ be a positive definite kernel. The reproducing kernel Hilbert space (RKHS) $\mathcal{H}$ of $k(\boldsymbol{\theta}, \boldsymbol{\theta}^{*})$ is the closure of linear span $\{f:f(\boldsymbol{\theta})=\sum_{i=1}^m a_i k(\boldsymbol{\theta}, \boldsymbol{\theta}_i), a_i \in \mathbb{R}, m \in \mathbb{N}, \boldsymbol{\theta}_i \in \mathbf{\Theta}\}$, equipped with inner products $\langle f,g\rangle_{\mathcal{H}} = \sum_{ij} a_ib_jk(\boldsymbol{\theta}_i, \boldsymbol{\theta}_j)$ for $g(\boldsymbol{\theta})=\sum_i b_ik(\boldsymbol{\theta}, \boldsymbol{\theta}_i)$. A norm is defined from the inner product as  $\|f\|_{\mathcal{H}}^2=\langle f,f\rangle_{\mathcal{H}}$. The operator norm of the function is defined as $\|f\|_{op}$. For a vector function $\boldsymbol{f}=(f_1,\cdots,f_d)'$ with $f_i \in \mathcal{H}$, we can define a function space $\mathcal{H}^d$, equipped with inner products $\langle \boldsymbol{f},\boldsymbol{g}\rangle_{\mathcal{H}^d} = \sum_{i=1}^d \langle f_i,g_i\rangle_{\mathcal{H}}$. Similarly a norm is defined as $\|\boldsymbol{f}\|_{\mathcal{H}^d}^2=\langle \boldsymbol{f},\boldsymbol{f}\rangle_{\mathcal{H}^d}$. A derivative of the vector function is defined as $\nabla_{\boldsymbol{\theta}} \boldsymbol{f}=(\nabla_{\boldsymbol{\theta}} f_1,\cdots,\nabla_{\boldsymbol{\theta}} f_d)$. Let $\mathbf{\Pi}_1$ be the set of probability distribution on $\mathbb{R}^d$ with integrable first moments and $\Pi \in \mathbf{\Pi}_1$ be a target posterior distribution whose density is $\pi(\boldsymbol{\theta})$.


\subsection{Stein Variational Gradient Descent}
\label{sec:svgdback}

Variational methods \citep{bishop2006pattern, blei2017variational} have been widely used to approximate the posterior distribution by minimizing the Kullback-Leibler (KL) divergence between $\Pi$ and a tractable distribution class. The distribution class should be large enough to cover a wide range of target distributions. Furthermore, it should consist of simple distributions that are easy to infer, making the computation process straightforward. The performance of the variational methods largely depends on the choice of the distribution class. Selecting an appropriate class of distributions is challenging in practice. To address this, \cite{liu2016stein} developed an SVGD algorithm, a variational method focusing on distributions obtained by smooth transformations from a tractable reference distribution $\mathcal{Q}$. 
This method does not require specific parametric forms of the variational distribution class and has simple deterministic updates similar to the typical gradient descent algorithm.

Consider an incremental transformation $\mathcal{T}(\boldsymbol{\theta}) = \boldsymbol{\theta} + \epsilon \phi (\boldsymbol{\theta}),$ where $\phi(\boldsymbol{\theta})$ is a smooth function for perturbation direction and
$\epsilon$ is a step size. Let $\Phi(\mathcal{Q})$ be a pushforward distribution (i.e., updated reference distribution) obtained by the transformation $\mathcal{T}(\boldsymbol{\theta})$ when $\boldsymbol{\theta}\sim\mathcal{Q}$. The goal is to find the direction $\phi$ that minimizes the KL divergence between the pushforward distribution $\Phi(\mathcal{Q})$ and the target distribution $\Pi$. \cite{liu2016stein} show that the derivative of $\text{KL}$ divergence between $\Phi(\mathcal{Q})$ and $\Pi$ can be expressed as
$$\nabla_{\epsilon} \text{KL}(\Phi(\mathcal{Q}) \| \Pi) |_{\epsilon=0} = - \mathbb{E}_{\mathcal{Q}} \left[\text{trace}(\mathcal{A}_{\Pi} \phi(\boldsymbol{\theta}))\right],$$
where $\mathcal{A}_{\Pi}\phi(\boldsymbol{\theta}) = \phi(\boldsymbol{\theta}) \nabla_{\boldsymbol{\theta}} \log \pi(\boldsymbol{\theta})' + \nabla_{\boldsymbol{\theta}} \phi(\boldsymbol{\theta})$ is called the Stein operator. Minimizing the KL divergence between $\Phi$ and $\Pi$ is equivalent to 
\begin{equation}
\label{opti_eq}
    \max_{\phi \in \mathcal{H}} \mathbb{E}_{\mathcal{Q}}\left[\text{trace}(\phi(\boldsymbol{\theta}) \nabla_{\boldsymbol{\theta}} \log \pi(\boldsymbol{\theta})' + \nabla_{\boldsymbol{\theta}} \phi(\boldsymbol{\theta}))\right].
\end{equation}
\cite{liu2016stein} assume that $\mathcal{H}$ is a ball of RKHS such as $\{\phi \in \mathcal{H}^d: \|\phi\|_{\mathcal{H}^d}^2 \leq \text{KSD}(\mathcal{Q}, \Pi) \}$ where $\text{KSD}(\mathcal{Q}, \Pi) =\{\max_{\phi \in \mathcal{H}^d} \left[\mathbb{E}_{\mathcal{Q}}  (\mathcal{A}_{\Pi} \phi(\boldsymbol{\theta}))\right]^2 \: \text{s.t.} \: \|\phi\|_{\mathcal{H}^d} \leq 1\}$ is the kernel Stein discrepancy (KSD).
Under the assumption, the direction $\phi$ maximizing the expectation of \eqref{opti_eq} has the following closed form solution: 
 \begin{equation}
 \label{dir_steep}
     \phi^{*}_{\mathcal{Q},\Pi}(\cdot) = \mathbb{E}_{\mathcal{Q}} \left[\nabla_{\boldsymbol{\theta}} \log \pi(\boldsymbol{\theta}) k(\boldsymbol{\theta},\cdot) + \nabla_{\boldsymbol{\theta}} k(\boldsymbol{\theta},\cdot)\right].
 \end{equation}
The closed form \eqref{dir_steep} of the perturbation direction enables a sequence of transformations from a reference distribution $\mathcal{Q}$ to the target distribution $\Pi$. Here, we will refer to the transformation of $\mathcal{Q}$ using \eqref{dir_steep} as continuous SVGD \citep{liu2017stein, shi2024finite}. In \eqref{dir_steep}, one can use commonly used kernels such as the radial basis function (RBF) kernel $k(\boldsymbol{\theta}, \boldsymbol{\theta}^{*}) = \exp(-|| \boldsymbol{\theta} - \boldsymbol{\theta}^{*} ||^2/h)$. Following \cite{liu2016stein}, we set $h=\text{med}^2/\log n$, where  $\text{med}$ denotes the median of the pairwise distance between particles.


In general, the expectation in \eqref{dir_steep} cannot be evaluated analytically.
Given a set of particles $\{\boldsymbol{\theta}_i\}_{i=1}^n $ generated from the reference distribution $\mathcal{Q}$,
\cite{liu2016stein} replace the expectation with an empirical mean as follows:
\begin{equation}
\label{appro_dir}
    \phi_{n}^{*}(\boldsymbol{\theta})=\frac{1}{n} \sum_{i=1}^n [\nabla_{\boldsymbol{\theta}_i} \log \pi(\boldsymbol{\theta}_i) k(\boldsymbol{\theta}_i, \boldsymbol{\theta})  + \nabla_{\boldsymbol{\theta}_i} k(\boldsymbol{\theta}_i,\boldsymbol{\theta})].
\end{equation}
Let $\mathcal{Q}_n = \sum_{i=1}^n \delta_{\boldsymbol{\theta}_i}/n$ denote the sample distribution of the particles $\{\boldsymbol{\theta}_i\}_{i=1}^n$, where $\delta_{\boldsymbol{\theta}}$ denotes the Dirac delta function.
An empirical transport map can be defined as $\mathcal{T}_{n}(\boldsymbol{\theta})=\boldsymbol{\theta}+\epsilon \phi_{n}^{*}(\boldsymbol{\theta})$, which updates the particles and provides the associated pushforward distribution $\Phi_n(\mathcal{Q}_n)$. 
A set of initial particles $\{\boldsymbol{\theta}_i^{(0)}\}_{i=1}^n$ can be generated from the initial reference distribution $\mathcal{Q}^{(0)}$ (e.g., prior).
The above procedure is summarized in Algorithm~\ref{svgdalg}.

\begin{algorithm}
    \caption{Stein Variational Gradient Descent (SVGD) algorithm \citep{liu2016stein} }
\textbf{Input}: A target distribution with a density function $\pi(\boldsymbol{\theta})$ and a set of initial particles $\{\boldsymbol{\theta}_i^{(0)}\}_{i=1}^n$. \\
\textbf{Output}: A set of particles $\left\{\boldsymbol{\theta}_i\right\}_{i=1}^n$ that approximates the target. 

    \begin{algorithmic}[1]
\For {iteration $t$}
\medskip
    \State $\phi_{n}^{*}(\boldsymbol{\theta}_i^{(t)}) \gets \frac{1}{n} \sum_{j=1}^n [\nabla_{\boldsymbol{\theta}_j^{(t)}} \log \pi(\boldsymbol{\theta}_j^{(t)}) k(\boldsymbol{\theta}_j^{(t)}, \boldsymbol{\theta}_i^{(t)})  + \nabla_{\boldsymbol{\theta}_j^{(t)}} k(\boldsymbol{\theta}_j^{(t)},\boldsymbol{\theta}_i^{(t)})]\text{ for } i=1, \cdots, n$ 
    \medskip
    \State $\boldsymbol{\theta}_i^{(t+1)} \gets \boldsymbol{\theta}_i^{(t)} + \epsilon^{(t)} \phi_{n}^{*}(\boldsymbol{\theta}_i^{(t)}) \text{ for } i=1, \cdots, n$
    \medskip
\EndFor
    \end{algorithmic}
     \label{svgdalg}
\end{algorithm}

Note that if we use a single particle, the algorithm collapses to a typical gradient ascent method for finding a maximum a posteriori (MAP) estimate for any kernel that satisfies $\nabla_{\boldsymbol{\theta}} k(\boldsymbol{\theta}, \boldsymbol{\theta})=0$. 
The term $\nabla_{\boldsymbol{\theta}_i} \log \pi(\boldsymbol{\theta}_i) k(\boldsymbol{\theta}_i, \boldsymbol{\theta})$ of \eqref{appro_dir} forces the particles to move towards regions with high values of $\pi(\boldsymbol{\theta})$
using gradient information. 
This is a sum of the gradients weighted by a kernel function;
if $\boldsymbol{\theta}_i$ is close to the remaining particles, a high weight is assigned in the update. The term $\nabla_{\boldsymbol{\theta}_i} k(\boldsymbol{\theta}_i,\boldsymbol{\theta})$ of \eqref{appro_dir} encourages the particles to explore diverse regions by preventing them from collapsing into a single local mode of $\pi(\boldsymbol{\theta})$. Compared to traditional MCMC algorithms, SVGD achieves particle efficiency by using deterministic updates directed toward the steepest descent in reducing the KL divergence \citep{liu2017stein}.

\section{Stein Variational Gradient Descent for Doubly Intractable Distributions}
\label{sec:3}

To expand its applicability to double intractable distributions, we propose a Monte Carlo Stein variational gradient descent method and offer practical guidelines for its implementation. We also provide theoretical justification
for the method.

\subsection{Monte Carlo Stein Variational Gradient Descent}
\label{sec:mcsvgd}

SVGD is particle efficient and does not require specifying a variational distribution class.
However, a direct application of this method to doubly intractable posterior distributions is challenging. Specifically, it requires evaluating the gradient of the log posterior as expressed in \eqref{appro_dir}, which is infeasible due to the intractable normalizing function $Z(\boldsymbol{\theta})$. To address this, we replace the intractable gradient with its Monte Carlo (MC) approximation. The gradient of the log posterior is written as 
\begin{equation}
\nabla_{\boldsymbol{\theta}}\log \pi(\boldsymbol{\theta}) = \nabla_{\boldsymbol{\theta}} \log h(\textbf{x}|\boldsymbol{\theta})- \nabla_{\boldsymbol{\theta}} \log Z(\boldsymbol{\theta}) + \nabla_{\boldsymbol{\theta}}\log p(\boldsymbol{\theta}).
\label{logposteriorgrad}
\end{equation}
Since we have $\nabla_{\boldsymbol{\theta}} \log Z(\boldsymbol{\theta})=\mathbb{E}_{f(\cdot| \boldsymbol{\theta})}\left[ \nabla_{\boldsymbol{\theta}} \log h(\textbf{x}|\boldsymbol{\theta})\right]$, a MC approximation to \eqref{logposteriorgrad} is given by
\begin{equation}
\nabla_{\boldsymbol{\theta}} \log h(\textbf{x}|\boldsymbol{\theta})- \frac{1}{m} \sum_{k=1}^m \nabla_{\boldsymbol{\theta}} \log h(\textbf{y}_k|\boldsymbol{\theta}) + \nabla_{\boldsymbol{\theta}}\log p(\boldsymbol{\theta}),
\label{MCapprox}
\end{equation}
where $\lbrace\textbf{y}_k\rbrace_{k=1}^m$ are MC samples generated from the probability model $f(\cdot | \boldsymbol{\theta})$. Plugging in this MC approximation into Algorithm~\ref{svgdalg} yields a naive version of our proposed approach. This naive method requires simulating auxiliary variables $\lbrace\textbf{y}_k\rbrace_{k=1}^m$ from the probability model every iteration, which may incur significant computational costs.

To accelerate computation, we employ self-normalized importance sampling (SNIS) \citep{tan2020bayesian,park2020function}. For any ${\boldsymbol{\psi}} \in \mathbf{\Theta}$, we have 
\[
\mathbb{E}_{f(\cdot|\boldsymbol{\theta})}\left[ \nabla_{\boldsymbol{\theta}} \log h(\textbf{x}|\boldsymbol{\theta})\right] = \frac{\mathbb{E}_{f(\cdot|\boldsymbol{\psi})}\left[ \nabla_{\boldsymbol{\theta}} \log h(\textbf{x}|\boldsymbol{\theta}) \frac{h(\textbf{x}|\boldsymbol{\theta})}{h(\textbf{x}|{\boldsymbol{\psi}})}\right]} {\mathbb{E}_{f(\cdot|\boldsymbol{\psi})}\left[ \frac{h(\textbf{x}|\boldsymbol{\theta})}{h(\textbf{x}|{\boldsymbol{\psi}})}\right]}.
\]
A consistent estimate of $\mathbb{E}_{f(\cdot|\boldsymbol{\theta})}\left[ \nabla_{\boldsymbol{\theta}} \log h(\textbf{x}|\boldsymbol{\theta})\right]$ can be obtained by
\begin{equation}
\label{snis_eq}
\sum_{k=1}^m w_k \nabla_{\boldsymbol{\theta}} \log h(\textbf{y}_k|\boldsymbol{\theta}) \:\: \text{with} \:\:w_k=\frac{h(\textbf{y}_k|\boldsymbol{\theta})}{h(\textbf{y}_k|{\boldsymbol{\psi}})}/\sum_{k=1}^{m} \frac{h(\textbf{y}_k|\boldsymbol{\theta})}{h(\textbf{y}_k|{\boldsymbol{\psi}})},
\end{equation}
where $\lbrace\textbf{y}_k\rbrace_{k=1}^m$ are MC samples generated from the model $f(\cdot | \boldsymbol{\psi})$. We can plug in this SNIS estimate into \eqref{logposteriorgrad} to approximate the gradient of the log posterior. Once we have MC samples from $f(\cdot | \boldsymbol{\psi})$, we can reuse these samples to approximate $\mathbb{E}_{f(\cdot|\boldsymbol{\theta})}\left[ \nabla_{\boldsymbol{\theta}} \log h(\textbf{x}|\boldsymbol{\theta})\right]$ for any $\boldsymbol{\theta}$. 
Therefore, SNIS can reduce computation costs significantly compared to the straightforward MC approximation.

It is important to note that the performance of SNIS can be influenced by the choice of $\boldsymbol{\psi}$. An approximation to the MLE or MAP estimate of $\boldsymbol{\theta}$ are typically used for $\boldsymbol{\psi}$. However, SNIS may produce poor estimates when $\boldsymbol{\psi}$ is far from $\boldsymbol{\theta}$. To address this problem, \cite{tan2020bayesian} proposed an adaptive sampling method that uses the effective sample size (ESS), calculated from the SNIS weights $w_k$'s, to guide whether to employ MC approximation or continue with SNIS  \citep{kong1994sequential, martino2017effective}. 
If the ESS falls below a prespecified threshold value---indicative of poor SNIS performance, they generate auxiliary variables from $f(\cdot | \boldsymbol{\theta})$ and estimate the intractable term by MC approximation. Otherwise, SNIS is used for approximation, leveraging previously generated auxiliary samples associated with $\boldsymbol{\psi}$.
This adaptive method can avoid poor estimates from SNIS while reducing computational costs compared to the straightforward MC approximation. Our Monte Carlo SVGD (MC-SVGD) incorporates this adaptive strategy to ensure reliability and computational efficiency.


Let $\mathbb{C}^{(t)}$ denote a set that contains cumulative information about particles and corresponding MC samples up to the $t$th iteration. We initialize $\boldsymbol{\psi}^{(0)}$ as the MAP estimate for $\boldsymbol{\theta}$ and $\lbrace\textbf{y}_{k}^{(0)}\rbrace_{k=1}^m$ as MC samples generated from $f(\cdot | \boldsymbol{\psi}^{(0)})$. Note that we can obtain the MAP by a preliminary run of MC-SVGD with a single particle; this step only takes several seconds. 
Given $\{\boldsymbol{\theta}_i^{(t)}\}_{i=1}^n$ and $\mathbb{C}^{(t)}$, for each particle $\boldsymbol{\theta}_i^{(t)}$, we find $\boldsymbol{\psi} \in \mathbb{C}^{(t-1)}$ closest to $\boldsymbol{\theta}_i^{(t)}$ in Euclidean distance and associated MC samples $\lbrace\textbf{y}_{k}\rbrace_{k=1}^m \in \mathbb{C}^{(t-1)}$ (i.e., $\lbrace\textbf{y}_{k}\rbrace_{k=1}^m \sim f(\cdot | \boldsymbol{\psi})$ from a past iteration) and compute $\text{ESS}$ $ = 1/\sum_{k=1}^m (w_{ik}^{(t)})^2$. If the ESS is greater than or equal to the threshold value, we approximate $\mathbb{E}_{f(\cdot|\boldsymbol{\theta}_i^{(t)})}\big[ \nabla_{\boldsymbol{\theta}} \log h(\textbf{x}|\boldsymbol{\theta}_i^{(t)})\big]$ through SNIS as in \eqref{snis_eq}. Otherwise, we generate "new" MC samples $\lbrace\textbf{y}_{k}\rbrace_{k=1}^m \sim f(\cdot | \boldsymbol{\theta}_i^{(t)})$ and perform a straightforward MC approximation as in \eqref{MCapprox}. We set $\boldsymbol{\psi}^{(t)}_i=\boldsymbol{\theta}_i^{(t)}$ and $ \lbrace\textbf{y}_{ik}^{(t)}\rbrace_{k=1}^{m}=\lbrace\textbf{y}_{k}\rbrace_{k=1}^m$, and append them to $\mathbb{C}^{(t)}$ (i.e., $\mathbb{C}^{(t)} \gets \mathbb{C}^{(t)} \cup \lbrace\boldsymbol{\psi}_i^{(t)}, \lbrace\textbf{y}_{ik}^{(t)}\rbrace_{k=1}^m\rbrace$). This procedure is repeated for each particle. Then we set $\mathbb{C}^{(t+1)} = \mathbb{C}^{(t)}$.

Similar to SVGD, we approximate the direction using the empirical mean as
\begin{equation}
\label{appro_dir_mcsvgd}
    \phi_{n,m}^{*}(\boldsymbol{\theta})=\frac{1}{n} \sum_{i=1}^n [\nabla_{\boldsymbol{\theta}_i} \log \widehat{\pi}(\boldsymbol{\theta}_i) k(\boldsymbol{\theta}_i, \boldsymbol{\theta})  + \nabla_{\boldsymbol{\theta}_i} k(\boldsymbol{\theta}_i,\boldsymbol{\theta})],
\end{equation}
where $\nabla_{\boldsymbol{\theta}_i}\log \widehat{\pi}(\boldsymbol{\theta}_i)$ is either a SNIS estimate or a straightforward MC approximation, and $\mathcal{Q}_{n,m} = \sum_{i=1}^n \delta_{\boldsymbol{\theta}_i}/n$ is the sample distribution of the particles $\{\boldsymbol{\theta}_i\}_{i=1}^n$. Note that $\mathcal{Q}_{n,m}$ is the sample distribution associated with MC-SVGD, while $\mathcal{Q}_{n}$ is associated with SVGD. We define an empirical transport map as $\mathcal{T}_{n,m}(\boldsymbol{\theta})=\boldsymbol{\theta}+\epsilon \phi_{n,m}^{*}(\boldsymbol{\theta})$ for updating particles. We denote the associated pushforward distribution by $\Phi_{n,m}(\mathcal{Q}_{n,m})$. The MC-SVGD is summarized in Algorithm~\ref{mcsvgdalg}.

\begin{algorithm}
    \caption{Monte Carlo Stein Variational Gradient Descent (MC-SVGD) algorithm}
\textbf{Input}: A sampler from the probability model $f(\cdot | \boldsymbol{\theta})$ and a set of initial particles $\{\boldsymbol{\theta}_i^{(0)}\}_{i=1}^n$. \\
 \textbf{Output}: A set of particles $\left\{\boldsymbol{\theta}_i\right\}_{i=1}^n$ that approximates the target.
    \begin{algorithmic}[1]
\For {iteration $t$}
 \For {particle $i$}
    \State $\boldsymbol{\psi} \gets$ closest to $\boldsymbol{\theta}_i^{(t)}$ in the collection $\mathbb{C}^{(t-1)}$
    \medskip
    \State $\lbrace\textbf{y}_{k}\rbrace_{k=1}^m \gets$  MC samples associated with $\boldsymbol{\psi}$ in the collection $\mathbb{C}^{(t-1)}$ 
    \medskip
    \State $w_{ik}^{(t)} \gets \frac{h(\textbf{y}_{k}|\boldsymbol{\theta}_i^{(t)})}{h(\textbf{y}_{k}|{\boldsymbol{\psi}})}/\sum_{k=1}^{m} \frac{h(\textbf{y}_{k}|\boldsymbol{\theta}_i^{(t)})}{h(\textbf{y}_{k}|{\boldsymbol{\psi}})}\text{ for } k=1, \cdots, m$
    \medskip
    \State $\text{ESS}$ $ \gets 1/\sum_{k=1}^m (w_{ik}^{(t)})^2$
    \medskip
\If{$\text{ESS}$ $\geq$ threshold}        
\medskip
     \State 
$\nabla_{\boldsymbol{\theta}_i} \log \widehat{\pi}(\boldsymbol{\theta}_i^{(t)}) 
     \gets 
     \begin{aligned}[t] & \nabla_{\boldsymbol{\theta}_i} \log h(\textbf{x}|\boldsymbol{\theta}_i^{(t)})-\sum_{k=1}^m {w_{ik}^{(t)}} \nabla_{\boldsymbol{\theta}_i} \log h(\textbf{y}_{k}|\boldsymbol{\psi}) + \nabla_{\boldsymbol{\theta}_i}\log p(\boldsymbol{\theta}_i^{(t)})\end{aligned}$
\Else
\medskip
      \State $\lbrace\textbf{y}_{k}\rbrace_{k=1}^m \sim f(\cdot | \boldsymbol{\theta}_i^{(t)})$
      \medskip
      \State $\nabla_{\boldsymbol{\theta}_i} \log \widehat{\pi}(\boldsymbol{\theta}_i^{(t)}) \gets 
      \begin{aligned}[t] & \nabla_{\boldsymbol{\theta}_i} \log h(\textbf{x}|\boldsymbol{\theta}_i^{(t)})-\frac{1}{m}\sum_{k=1}^m \nabla_{\boldsymbol{\theta}_i} \log h(\textbf{y}_{k}|\boldsymbol{\theta}_i^{(t)}) + \nabla_{\boldsymbol{\theta}_i}\log p(\boldsymbol{\theta}_i^{(t)})\end{aligned}$
      \State $\boldsymbol{\psi}^{(t)}_i \gets \boldsymbol{\theta}_i^{(t)}$ 
      \medskip
      \State $\lbrace\textbf{y}_{ik}^{(t)}\rbrace_{k=1}^m \gets \lbrace\textbf{y}_k\rbrace_{k=1}^m$ 
      \medskip
       \State $\mathbb{C}^{(t)} \gets \mathbb{C}^{(t)} \cup \lbrace\boldsymbol{\psi}_i^{(t)}, \lbrace\textbf{y}_{ik}^{(t)}\rbrace_{k=1}^m\rbrace$
     \medskip
\EndIf
\EndFor
\medskip
       \State $\mathbb{C}^{(t+1)} \gets \mathbb{C}^{(t)}$
      \medskip
    \State $\phi_{n,m}^{*}(\boldsymbol{\theta}_i^{(t)}) \gets \frac{1}{n} \sum_{j=1}^n [\nabla_{\boldsymbol{\theta}_j^{(t)}} \log \widehat{\pi}(\boldsymbol{\theta}_j^{(t)}) k(\boldsymbol{\theta}_j^{(t)}, \boldsymbol{\theta}_i)  + \nabla_{\boldsymbol{\theta}_j^{(t)}} k(\boldsymbol{\theta}_j^{(t)},\boldsymbol{\theta}_i)]\text{ for } i=1, \cdots, n$ 
    \medskip
    \State $\boldsymbol{\theta}_i^{(t+1)} \gets \boldsymbol{\theta}_i^{(t)} + \epsilon^{(t)} \phi_{n,m}^{*}(\boldsymbol{\theta}_i^{(t)})\text{ for } i=1, \cdots, n$
    \medskip
\EndFor
    \end{algorithmic}
        \label{mcsvgdalg}
\end{algorithm}

In comparison to SVGD, MC-SVGD involves additional computations to approximate the gradient of the log posterior. This additional computation can be efficiently managed through parallel computation since the approximations can be carried out independently for each particle. The computational costs can be reduced by a factor of the number of available cores. The parallel computing was implemented using {\tt OpenMP} library in {\tt C++} or {\tt doParallel} package in {\tt R}.

To determine the suitable number of iterations for our method, we employ the KL divergence between $\mathcal{Q}_{n,m}^{(t)}$ and $\mathcal{Q}_{n,m}^{(t-100)}$ where $\mathcal{Q}_{n,m}^{(t)}$ denote $\mathcal{Q}_{n.m}$ at iteration $t$. We assess the KL divergence every 100 iterations and stop the algorithm when the divergence stabilizes without significant changes.

\subsection{Implementation Details}
\label{sec:implementation}

This section provides essential implementation guidelines for MC-SVGD. To effectively implement our method, the following parameters must be tuned: (1) a set of initial particles, (2) step size $\epsilon$, (3) the number $n$ of particles, (4) the number $m$ of MC samples, and (5) ESS threshold.
We draw initial particles $\lbrace\boldsymbol{\theta}_i^{(0)}\rbrace_{i=1}^n$ from $N(\boldsymbol{\theta}^{\text{MAP}}, \boldsymbol{\Sigma})$ where $\boldsymbol{\theta}^{\text{MAP}}$ is the MAP estimate and $\boldsymbol{\Sigma}$ can be set as an identity matrix or determined by a low-cost estimate from MPLE or a simplified modeling approach (see Section~\ref{sec:4} for further details).
We recommend $\epsilon$ between 0.0001 and 0.001 to satisfy the condition associated with the step size specified in Theorem~\ref{Theorem 3} in Section~\ref{sec:theory}. More details are available in the same section.

We conduct sensitivity analyses for the remaining tuning components. Full details on the analysis are provided in the supplemental Section B.
Choosing appropriate values for $n$ and $m$ needs a consideration of the trade-off between accuracy and computational efficiency. According to Theorem~\ref{Theorem 3}, increasing $n$ and $m$ reduces the approximation error at a rate of $\mathcal{O}((\log \log (n))^{-1/2})+\mathcal{O}(m^{-1/2})$. Our empirical studies recommend $n \geq 30d$ and $m \geq 50$ to achieve reliable inference results. 
We found that an ESS threshold of $m/1.5$ ensures reliable SNIS estimates. Note that this threshold is more conservative than the $m/3$ used in \cite{tan2020bayesian}.

\subsection{Theoretical Justification for MC-SVGD}
\label{sec:theory}

We examine the approximation error for the proposed MC-SVGD approach which involves two sources of error: (1) finite particle approximation error and (2) MC approximation error. Taking both types of error into account, we provide a bound for KSD between MC-SVGD and the target distribution. 
We make the following assumptions.

\begin{assumption}[Lipschitz, mean-zero score function]
 \label{Assumption 1}
  The target distribution $\Pi$ $\in \mathbf{\Pi}_1$ has a differentiable density $\pi(\boldsymbol{\theta})$ with an L-Lipschitz score function $\nabla \log \pi(\boldsymbol{\theta}), i.e., \|\nabla \log \pi(\boldsymbol{\theta})-\nabla \log \pi(\boldsymbol{\theta}^{*})\|_2 \leq L\|\boldsymbol{\theta}-\boldsymbol{\theta}^{*}\|_2$ for all $\boldsymbol{\theta},\boldsymbol{\theta}^{*} \in \mathbb{R}^d$. Moreover, $\mathbb{E}_{\Pi}[\nabla \log \pi(\boldsymbol{\theta})]=0$ and $\nabla \log \pi(\boldsymbol{\theta})=0$ for some $\boldsymbol{\theta} \in \mathbb{R}^d$.
\end{assumption}

\begin{assumption}[Bounded kernel derivatives]
 \label{Assumption 2}
 The kernel $k$ is twice differentiable and \\
 $\sup_{\boldsymbol{\theta}, \boldsymbol{\theta}^{*} \in \mathbb{R}^d} \max \{|k(\boldsymbol{\theta},\boldsymbol{\theta}^{*})|, \|\nabla_{\boldsymbol{\theta}} k(\boldsymbol{\theta},\boldsymbol{\theta}^{*})\|_2, \|\nabla_{\boldsymbol{\theta}^{*}} \nabla_{\boldsymbol{\theta}} k(\boldsymbol{\theta},\boldsymbol{\theta}^{*})\|_{op}, \|\nabla_{\boldsymbol{\theta}}^2 k(\boldsymbol{\theta},\boldsymbol{\theta}^{*})\|_{op}\} \leq \kappa_1^2$ for $\kappa_1 > 0$. Moreover, for all $i, j \in \{1,2,...,d\}, \sup_{\boldsymbol{\theta} \in \mathbb{R}^d} \nabla_{\boldsymbol{\theta}^{*}_i} \nabla_{\boldsymbol{\theta}^{*}_j} \nabla_{\boldsymbol{\theta}_i} \nabla_{\boldsymbol{\theta}_j} k(\boldsymbol{\theta},\boldsymbol{\theta}^{*})|_{\boldsymbol{\theta}^{*}=\boldsymbol{\theta}} \leq \kappa_2^2$ for $\kappa_2 > 0$.
 \end{assumption}

 \begin{assumption}[Decaying kernel derivatives]
 \label{Assumption 3}
 The kernel $k$ is differentiable and admits a $\gamma \in \mathbb{R}$ such that, for all $\boldsymbol{\theta}, \boldsymbol{\theta}^{*} \in \mathbb{R}^d$ satisfying $\|\boldsymbol{\theta}-\boldsymbol{\theta}^{*}\|_2 \geq 1,$
    $$\|\nabla_{\boldsymbol{\theta}} k(\boldsymbol{\theta},\boldsymbol{\theta}^{*}) \|_2 \leq \gamma / \|\boldsymbol{\theta}-\boldsymbol{\theta}^{*}\|_2.$$
\end{assumption}

These assumptions are general and not overly restrictive. Previous studies have commonly adopted the Lipschitz score function \citep{gorham2015measuring, liu2017stein, salim2022convergence}. \cite{shi2024finite} showed that a wide variety of kernels, such as RBF kernels, IMQ kernels, and Gaussian kernels, satisfy Assumptions~\ref{Assumption 2} and \ref{Assumption 3}. In this article, we use the RBF kernels for analyses.

Here, we define the Langevin KSD for theoretical justification. 
Under Assumptions \ref{Assumption 1} and \ref{Assumption 2}, the Langevin KSD is defined as $\text{KSD}_{\Pi}(\mathcal{Q}, \mathcal{V}) \triangleq \sup_{\|\phi\|_{\mathcal{H}_d} \leq 1} \mathbb{E}_{\mathcal{Q}}[\mathcal{A}_{\Pi}\phi] -\mathbb{E}_{\mathcal{V}}[\mathcal{A}_{\Pi}\phi]$. Note that KSD defined in Section~\ref{sec:2} computes the maximum expectation error between an arbitrary distribution $\mathcal{Q}$ and the target, while the Langevin KSD computes the maximum expectation error between two arbitrary distributions $\mathcal{Q}$ and $\mathcal{V}$. According to Lemma 1 in \cite{shi2024finite}, the Langevin KSD is symmetric and satisfies the triangle inequality. We henceforth refer to the Langevin KSD as KSD for convenience. We use $\mathcal{Q}_{\infty}^{(t)}$ to represent the transformed distribution resulting from $t$ iterations of the continuous SVGD with the initial reference distribution $\mathcal{Q}_{\infty}^{(0)}$. Similarly, we let $\mathcal{Q}_{n,m}^{(t)}$ and $\mathcal{Q}_{n}^{(t)}$ denote the transformed distribution produced by $t$ iterations of MC-SVGD and  SVGD with the initial reference distribution $\mathcal{Q}_{n.m}^{(0)}$ and $\mathcal{Q}_{n}^{(0)}$, respectively.

In what follows, we establish the convergence rate of MC-SVGD. A detailed proof is provided in the supplementary material; here, we outline the main ideas. We begin by quantifying the Monte Carlo approximation error of MC-SVGD (Lemma S1), followed by deriving the KSD between $\mathcal{Q}_{n,m}^{(t)}$ and $\mathcal{Q}_{\infty}^{(t)}$ (Theorems S1 and S2). We then present a bound on the KSD between $\mathcal{Q}_{n,m}^{(t)}$ and the target distribution $\Pi$. These results collectively lead to Theorem~\ref{Theorem 3}, which establishes the convergence rate of MC-SVGD. Formal statements and complete proofs can be found in the supplemental Section~A.

\begin{theorem} [A convergence rate of MC-SVGD]
 \label{Theorem 3}
  Let $\Pi$ satisfies Talagrand's $T_1$ inequality from Definition 22.1 in \cite{villani2009optimal}. Suppose Assumptions 1, 2, and 3 hold, fix any $\mathcal{Q}_{\infty}^{(0)}$ which is absolutely continuous with respect to $\Pi$ and $\mathcal{Q}_{n,m}^{(0)}, \mathcal{Q}_n^{(0)}, \mathcal{Q}_{\infty}^{(0)} \in \mathbf{\Pi}_1$. Then the outputs $\mathcal{Q}_{n,m}^{(t)}$ satisfy
    \begin{equation} \label{ksd_cor1_conv}
      \begin{split}
          \min_{0 \leq r \leq t} \text{KSD}_{\Pi}(\mathcal{Q}_{n,m}^{(t)}, \Pi)
    & =\mathcal{O}((\log \log (n))^{-1/2}) + \mathcal{O}(m^{-1/2}),
      \end{split}
  \end{equation}
with probability at least $1-c\delta$ for a universal constant $c>0$.
\end{theorem}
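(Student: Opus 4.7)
The plan is to bound $\text{KSD}_{\Pi}(\mathcal{Q}_{n,m}^{(r)}, \Pi)$ by a triangle-inequality decomposition into three pieces that can each be controlled by a distinct tool. Using the triangle inequality for the Langevin KSD (Lemma 1 of Shi and Mackey, 2024), we write
\[
\text{KSD}_{\Pi}(\mathcal{Q}_{n,m}^{(r)}, \Pi) \leq \text{KSD}_{\Pi}(\mathcal{Q}_{n,m}^{(r)}, \mathcal{Q}_{n}^{(r)}) + \text{KSD}_{\Pi}(\mathcal{Q}_{n}^{(r)}, \mathcal{Q}_{\infty}^{(r)}) + \text{KSD}_{\Pi}(\mathcal{Q}_{\infty}^{(r)}, \Pi),
\]
isolating a Monte Carlo error, a finite-particle error, and the continuous SVGD bias. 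The strategy is to show that the first term is $\mathcal{O}(m^{-1/2})$ uniformly over iterates, the second yields $\mathcal{O}((\log \log n)^{-1/2})$ at some iterate $r \leq t$, and the third vanishes in the same minimum, so that the advertised bound follows.

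First I would quantify the Monte Carlo score-approximation error via Lemma S1: under Assumption~2 and the boundedness of the SNIS/MC hybrid (thanks to the ESS threshold, which caps importance-weight variance), the perturbation-direction difference $\phi_{n,m}^{*}(\boldsymbol{\theta}) - \phi_{n}^{*}(\boldsymbol{\theta})$ has RKHS norm of order $m^{-1/2}$ with high probability. Propagating this through $t$ steps of the empirical transport map $\mathcal{T}_{n,m}$ requires a Gr\"onwall-type argument that leans on the Lipschitz score (Assumption~1) and the step-size restriction $\epsilon \in [10^{-4}, 10^{-3}]$ to prevent error amplification; this yields Theorems S1 and S2 and in particular $\text{KSD}_{\Pi}(\mathcal{Q}_{n,m}^{(r)}, \mathcal{Q}_{n}^{(r)}) = \mathcal{O}(m^{-1/2})$ with probability at least $1 - c\delta$ after a union bound over iterations and particles. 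The finite-particle term is then handled by the Shi--Mackey (2024) analysis of SVGD: under Assumptions~1--3, the RBF kernel choice, and Talagrand's $T_1$ inequality (which transports sub-Gaussian tail control from $\Pi$ to the iterates), one obtains $\min_{0 \leq r \leq t} \text{KSD}_{\Pi}(\mathcal{Q}_{n}^{(r)}, \mathcal{Q}_{\infty}^{(r)}) = \mathcal{O}((\log \log n)^{-1/2})$. The continuous SVGD bias $\text{KSD}_{\Pi}(\mathcal{Q}_{\infty}^{(r)}, \Pi) \to 0$ along the descent trajectory by Liu (2017), and taking the same minimum over $r$ absorbs it into the dominating rate.

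The main obstacle I anticipate is the first step---ensuring that the Monte Carlo noise re-injected at each iteration does not accumulate catastrophically across the $t$ transport-map compositions. This is precisely what the adaptive SNIS/MC scheme is engineered to prevent, since fresh MC samples are drawn whenever the ESS falls below $m/1.5$, but making it rigorous requires carefully tracking how the per-step perturbation error interacts with $\mathcal{T}_{n,m}$, verifying that the ``good'' event that the ESS stays above threshold sufficiently often holds with the claimed probability, and checking that the resulting Gr\"onwall constant does not depend on $t$ in a way that negates the $m^{-1/2}$ rate. Once Lemma S1 and its recursive propagation are in place, the remaining two pieces follow cleanly from existing machinery combined via a final union bound.
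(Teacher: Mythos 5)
Your high-level decomposition (MC error $+$ finite-particle error $+$ continuous-SVGD bias) matches the paper's in spirit, and your treatment of the first piece via Lemma S1 plus a Gr\"onwall-type propagation is essentially what Theorems S1--S2 do (the paper works in $W_1$ first and converts to KSD via Lemma 4 of Shi and Mackey, but that is a cosmetic difference). The genuine gap is in how you obtain the $(\log\log n)^{-1/2}$ rate from the remaining two terms. You bound $\min_{0\le r\le t}$ of each term separately and assert that Shi--Mackey directly yields $\min_{r}\text{KSD}_{\Pi}(\mathcal{Q}_{n}^{(r)},\mathcal{Q}_{\infty}^{(r)})=\mathcal{O}((\log\log n)^{-1/2})$ while the bias $\text{KSD}_{\Pi}(\mathcal{Q}_{\infty}^{(r)},\Pi)$ ``vanishes in the same minimum'' by Liu (2017). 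Neither claim is available in that form: the minimum of a sum is not controlled by the sum of termwise minima (the minimizing iterates need not coincide), Liu's result is only asymptotic, and Shi--Mackey's doubly-logarithmic rate is not a property of the particle-vs-continuous discrepancy at some iterate. In the actual argument, the deviation term $\text{KSD}_{\Pi}(\mathcal{Q}_{n,m}^{(r)},\mathcal{Q}_{\infty}^{(r)})\le a^{(t-1)}+\mathcal{O}(m^{-1/2})$ is uniform in $r\le t$ but grows doubly exponentially in the cumulative step size $b^{(t-1)}$, scaled by the initial empirical Wasserstein distance $\overline{w}\asymp n^{-1/\max(2,d)}$ (this is also where the probability $1-c\delta$ comes from, via Lei (2020) and Markov applied to the initial particle sample --- not from a union bound over MC draws), while the bias term is controlled only on a weighted average through the KL descent bound of Corollary 1 in Shi and Mackey, decaying like $(R_{\alpha,1}+b^{(t-1)})^{-1/2}$, which is where Talagrand's $T_1$ enters. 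The rate $(\log\log n)^{-1/2}$ emerges precisely from choosing the total step-size budget $b^{(t-1)}=s_n^*$ (defined through the growth functions $\phi$ and $\zeta_{\overline{B},\overline{C}}$) to balance the doubly-exponential blow-up against the $b^{-1/2}$ descent; with, say, a fixed step size and $t\to\infty$, your bound on the deviation term explodes and no rate follows. Your proposal omits this balancing step entirely, which is the core of the proof, and also combines the descent (average-type) bound with the deviation (uniform-type) bound only informally via Jensen, a step the paper makes explicit.
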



Provided the theorem holds, the KSD between MC-SVGD and the target distribution converges to zero at a rate of $\mathcal{O}((\log \log (n))^{-1/2}) + \mathcal{O}(m^{-1/2})$. We note that achieving the desired convergence rate requires the sum of the step size sequence to be less than or equal to one (supplemental Section~A). Based on our numerical experiments, we recommend using approximately 500 iterations in practice. Accordingly, choosing a step size in the range of 0.0001 to 0.001 satisfies this condition.


\section{Applications}
\label{sec:4}

Here we apply our MC-SVGD method to three different challenging examples: (1) a Potts model, (2) a Conway--Maxwell--Poisson regression model, and (3) an exponential random graph model. The performance of our MC-SVGD is assessed by comparing its posterior density estimates to those of the gold standard method. In cases where perfect sampling is feasible, the exchange algorithm \citep{murray2006} serves as the gold standard; otherwise, the double Metropolis-Hastings algorithm \citep{liang2010double} is employed. 
The code for the examples is implemented in {\tt R} and {\tt C++}, using \texttt{Rcpp} and \texttt{RcppArmadillo} packages \citep{eddelbuettel2011rcpp}. 
The highest posterior density (HPD) intervals are calculated using an {\tt R} package \texttt{coda} \citep{plummer2006}. All codes are run on dual 16-core AMD Ryzen 9 7950X processors. 
Particles are updated in parallel using 32 cores. Derivations for gradients of each application are provided in the supplemental Section~C. The source code can be downloaded from \url{https://github.com/codinheesang/MCSVGD}.

\subsection{A Potts Model}

The Potts model \citep{potts1952some}, a generalized version of the Ising model \citep{ising1924beitrag, lenz1920beitrag}, can be used to model discrete-valued spatial data on a lattice. For an observed $N \times N$ lattice $\textbf{x} = \left\{x_{i}\right\}$ with $x_{i} \in \left\{1,\cdots,K\right\}$, the likelihood function of the Potts model is given by 
$$L(\theta | \textbf{x}) = \frac{1}{Z(\theta)} \exp\left\{\theta S(\textbf{x})\right\},$$
$$S(\textbf{x})=\sum_{i \sim j} I(x_i = x_j),$$
where $\theta >0$, $i \sim j$ indicates neighboring elements, and $I(\cdot)$ denotes the indicator function. A large value of $\theta$ provides a high expected number of neighboring pairs having the same value. Calculation of the normalizing function $Z(\theta)$ is computationally expensive because it requires summation over all $K^{N \times N}$ possible outcomes for the model. 

We study the Antarctic ice sheet dataset \citep{pollard2015large}, which provides the observed thickness of ice for a $171 \times 171$ lattice with 20-km resolution at the center of the South Pole. The observed thickness is classified into four different categories as follows: $x_i$ = 0 for no ice; $x_i$ = 1 for $0< \text{thickness} \leq 1,000$, $x_i$ = 2 for  $1,000< \text{thickness} \leq 2,000$; $x_i$ = 3 for  $ \text{thickness} > 2,000$. We consider our MC-SVGD method and double Metropolis-Hastings algorithm (DMH) for carrying out inference for $\theta$. For comparison of our method to DMH, we examine the posterior inference results for $\theta$ and the resulting estimated surfaces of ice thickness. 

For MC-SVGD, we estimate $\theta^{\text{MAP}}$ by a preliminary run of the naive MC-SVGD with a single particle over 300 iterations.
An initial set of 64 particles is an i.i.d. sample from a normal distribution with a mean equal to the MAP estimate and variance derived from a Poisson regression of the response onto the number of neighboring pairs having the same value.
Our MC-SVGD was run for $500$ iterations with $m$ = 50, threshold value = $m/3$, and $\epsilon$ = 0.0001. 
For generating MC samples, we run 80 cycles of the Swendsen-Wang algorithm \citep{swendsen1987nonuniversal,geyer2022potts}, with each cycle updating the entire $171 \times 171$ lattice, and retain the last 50 cycles, which provides 50 distinct lattice samples.
We run DMH with 30 cycles of (inner) Swendsen-Wang updates for 11,000 iterations until convergence and discard the first 1,000 samples as burn-in.

\newcolumntype{M}[1]{>{\centering\arraybackslash}m{#1}}
\newcolumntype{P}[1]{>{\centering\arraybackslash}p{#1}}
\begin{table}[t]
  \centering
    \begin{tabular}{lrrr}
        \toprule
         Method & Posterior mean & 95\% HPD interval & Time (min) \\
        \midrule
         DMH & 1.23 & (1.22, 1.24) & 34.00\\
         MC-SVGD & 1.23 & (1.22, 1.24) & 2.13 \\
        \bottomrule
    \end{tabular}
    \caption{Posterior inference results of $\theta$ for the Potts model on the Antarctic ice sheet dataset. 
}
    \label{tab:potts}
\end{table}

Table \ref{tab:potts} presents the posterior summary statistics for $\theta$ along with the computation times for DMH and MC-SVGD. The results indicate that both methods yield identical estimated posterior means and 95\% HPD intervals. However, the computation time for MC-SVGD was significantly shorter at 2.13 minutes, compared to 34 minutes for DMH. 
Figure \ref{fig:potts_icesheet} shows the observed ice sheet surface alongside the estimated surface derived from the MC-SVGD sample, with the estimated surface closely matching the observed data. In conclusion, the proposed MC-SVGD performs similarly to DMH---recognized as one of the most effective algorithms for addressing intractable normalizing function problems---yet requires only a fraction of the computational time.

\begin{figure}[t]
     \centering
     \begin{subfigure}[b]{0.45\textwidth}
         \centering
         \includegraphics[width=\textwidth]{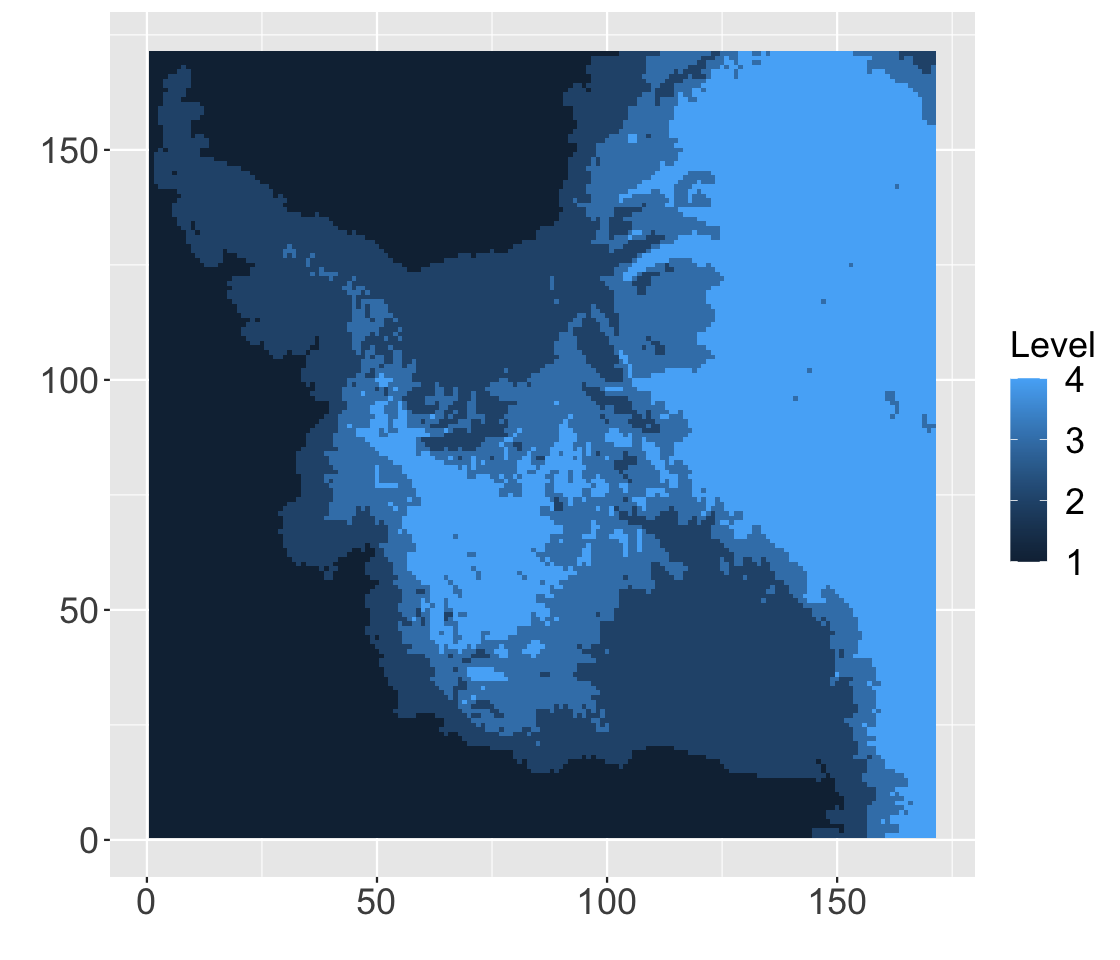}
         \caption{True surface}
         \label{fig:potts_true}
     \end{subfigure}
     \hfill
     \begin{subfigure}[b]{0.45\textwidth}
         \centering
        \medskip         
         \includegraphics[width=\textwidth]{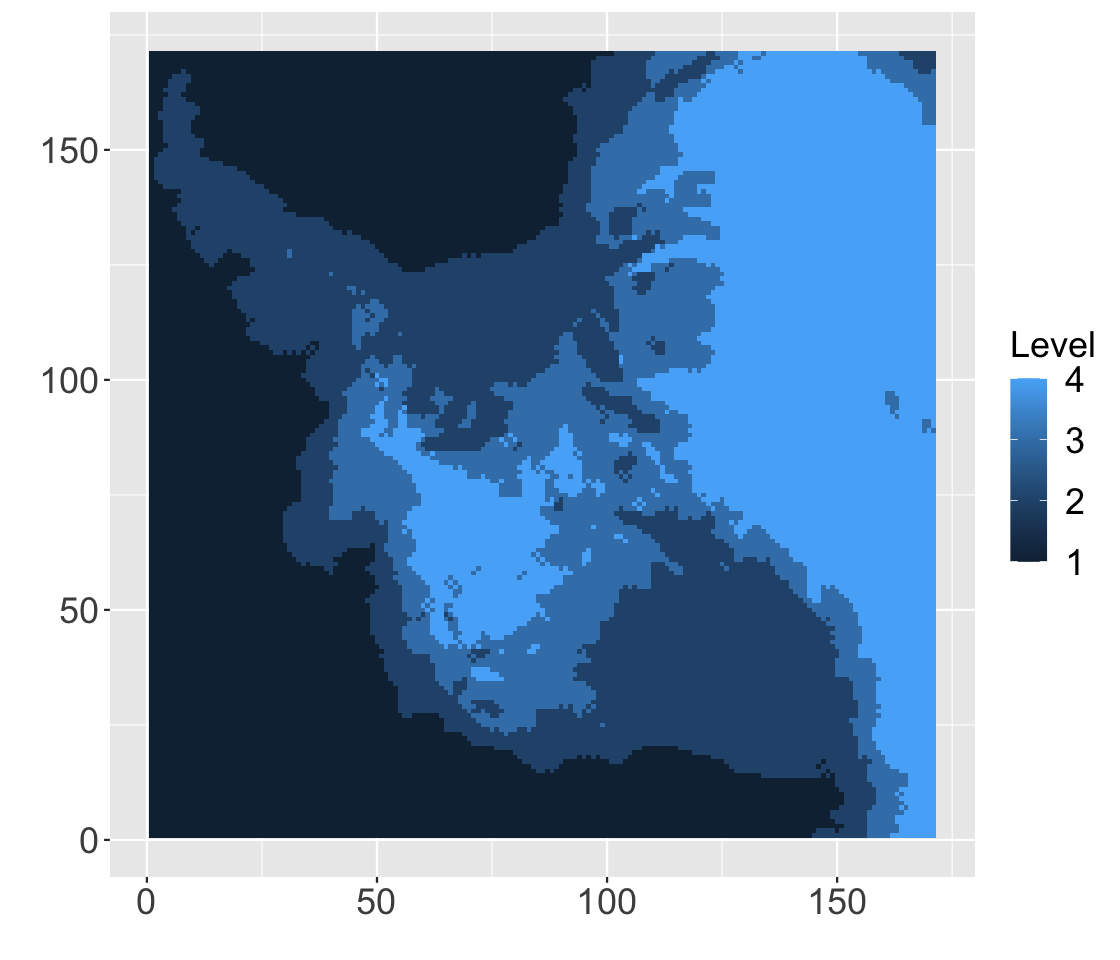}
         \caption{Estimated surface}
         \label{fig:potts_estimated}
     \end{subfigure}
        \caption{Observed ice sheet surface and estimated surface via MC-SVGD for the Potts example.}
        \label{fig:potts_icesheet}
\end{figure}

\subsection{A Conway--Maxwell--Poisson Regression Model}

A Conway--Maxwell--Poisson (COMP) distribution \citep{conway1962network} is a generalized version of the Poisson distribution, which can account for underdispersion (i.e., mean smaller than variance) and overdispersion (i.e., mean larger than variance). 
The probability mass function of a count variable $Y \sim \text{COMP}(\eta, \nu)$ is given by 
$$P(Y=y) = \frac{1}{Z(\eta, \nu)} \left( \frac{\eta^y}{y!} \right)^\nu,$$
where $\eta > 0$ approximates the mode of the distribution, and $\nu \geq 0$ is a dispersion parameter for which $\nu > 1$, $\nu = 1$, and $0\leq \nu < 1$ represent under-, equi-, and overdispersion, respectively.
The normalizing function $Z(\eta,\nu)= \sum_{j=0}^\infty \left( \frac{\eta^j}{j!} \right)^\nu$ is an infinite sum, which makes the function intractable.

\begin{table}[b]
  \centering
    \begin{tabular}{llrrr}
        \toprule
         N & Method & Posterior mean & 95\% HPD interval & Time (sec) \\
        \midrule
         \multirow{2}{*}{225} & Exchange & 1.05 & (0.87, 1.25) & 30.02\\
                             & MC-SVGD & 1.05 & (0.85, 1.21) & 5.08\\
         \midrule
         \multirow{2}{*}{400} & Exchange & 0.93 & (0.78, 1.07) & 50.30\\
                             & MC-SVGD & 0.93 & (0.77, 1.05) & 7.37\\
        \midrule
        \multirow{2}{*}{2500} & Exchange & 1.01 & (0.95, 1.07) & 256.49\\
                            & MC-SVGD & 1.01 & (0.96, 1.07) & 38.56\\
        \bottomrule
    \end{tabular}
    \caption{Posterior inference results of $\beta_1$ for the COMP regression model on the simulated data with increasing size $N$ of data. The simulated truth of $\beta_1$ is 1. 
    }
    \label{tab:comp_theta1}
\end{table}

For an observed response $y_i$ and the covariates $\textbf{x}_i = ( x_{i1}, x_{i2},\dots, x_{i,p})^\top$, we consider the following COMP regression model: $y_i \sim \text{COMP}(\eta_i, \nu)$ and $\log(\eta_i) = \sum_{j=1}^p \beta_j x_{ij}$ for $i = 1, \dots, N$. The likelihood function of the COMP regression model is given by
\begin{align*}
    L(\beta_1, \cdots, \beta_p, \nu \mid \textbf{y}) &= \prod_{i=1}^N \frac{1}{Z(\eta_i, \nu)} \left( \frac{\eta_i^{y_i}}{y_i!}\right)^\nu\\
    &= \frac{\exp \left\{ \nu \sum_{j=1}^{p} \beta_j S_j(\textbf{y}) - \nu S_{p+1}(\textbf{y})\right\}}{\prod_{i=1}^N Z(\eta_i, \nu)},
\end{align*}
where $S_j(\textbf{y})=\sum_{i=1}^N x_{ij}y_i$, $j = 1,\dots,p$  and $S_{p+1}(\textbf{y})=\sum_{i=1}^N \log(y_i!)$ are sufficient statistics. From the COMP regression model, we simulated three different datasets with increasing data size $N \in \{225, 400, 2500\}$ exactly from the model through a fast rejection sampling \citep{chanialidis2018efficient}. We set $p = 3$, $\beta_1 = \beta_2 = 1, \beta_3 = 0.1$, and $\nu = \exp(0.5)$. The $\nu$ is assumed to be known and fixed at truth. In this example we consider our MC-SVGD method and the exchange algorithm.
The exchange algorithm is asymptotically exact, i.e., it generates a sequence
whose asymptotic distribution is exactly equal to the target posterior distribution. We compare the posterior inference results for $\boldsymbol{\theta} = (\beta_1, \beta_2, \beta_3)^\top$ and the computing times with respect to increasing simulated data size.

For MC-SVGD, we obtained $\boldsymbol{\theta}^{\text{MAP}}$ through a preliminary run of the naive MC-SVGD using a single particle over 300 iterations.
An initial set of 96 particles is drawn from a multivariate normal distribution $N(\boldsymbol{\theta}^{\text{MAP}},\boldsymbol{\Sigma})$, where $\boldsymbol{\Sigma}$ is the estimated covariance matrix of $\boldsymbol{\theta}$ derived from a Poisson regression of the response onto the covariates. 
MC-SVGD was run for $500$ iterations with $m$ = 50, threshold value = $m/3$, and $\epsilon$ = 0.0005 for $N \in \{225,400\}$ and $\epsilon$ = 0.0001 for $N=2500$. The $m = 50$ samples for MC approximation are generated exactly from the data model through the rejection sampling.
The exchange algorithm was run for 51,000 iterations to achieve convergence, with the initial 1,000 samples discarded as burn-in.

\begin{figure}[t]
     \centering
     \begin{figure}
         \centering
         \includegraphics[width=.5\textwidth]{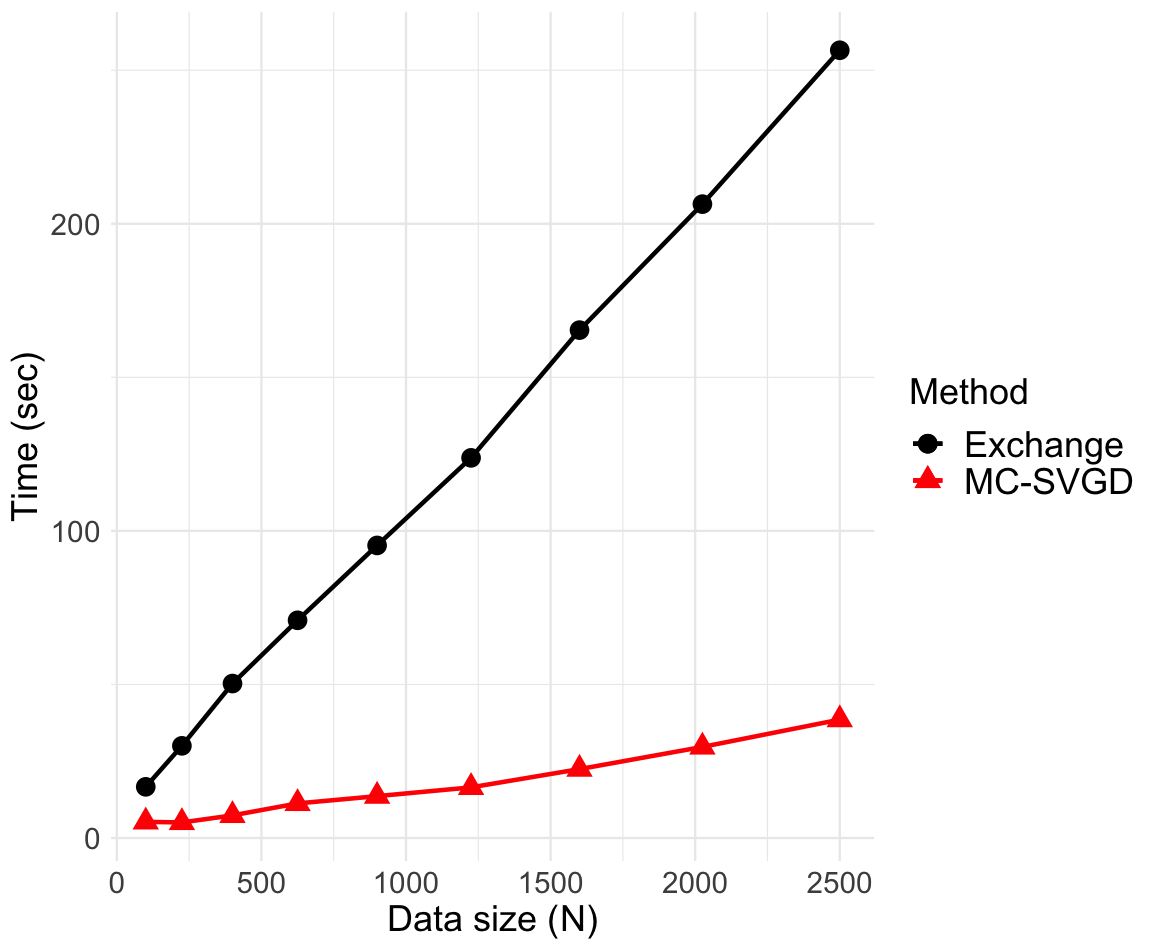}
     \end{figure}
        \caption{Comparison of computing time between the exchange and MC-SVGD algorithms with increasing data size $N$ for the COMP example.}
        \label{fig:comp_time}
\end{figure}

Table \ref{tab:comp_theta1} presents posterior summary statistics of $\beta_1$ and computing times for the exchange and MC-SVGD algorithms with increasing data size $N$.
The results demonstrate that MC-SVGD achieves estimated posterior means and 95\% HPD intervals similar to those of the asymptotically exact exchange algorithm, while being at least 5.9 times faster.
Comparable results for the other parameters are provided in the supplemental Section~D.
Figure \ref{fig:comp_time} shows the computing time for the exchange and MC-SVGD algorithms as data size increases. The results indicate that the computational advantage of MC-SVGD over the exchange method becomes more pronounced as the data size $N$ grows. Overall, our algorithm delivers reliable inference and demonstrates significant computational benefits compared to the exchange algorithm, especially for large datasets.

\subsection{An Exponential Random Graph Model}

An exponential random graph model (ERGM) \citep{robins2007introduction,hunter2008ergm} describes the relationship among nodes of a network. Let $\textbf{x}$ be an $N \times N$ adjacency matrix of an undirected network with $N$ nodes.
For all $i \neq j$, $x_{i,j}=1$ if nodes $i$ and $j$ are connected and $x_{i,j}=0$ otherwise. For $i = 1,\dots,N$, $x_{i,i} = 0$. We study the Faux Mesa high school network dataset \citep{resnick1997protecting}, which demonstrates an in-school friendship network among $N = 205$ students. The vertex attributes we consider are `Grade' (7--12) and `Sex' (male or female). Consider an undirected ERGM whose likelihood function is given by 
\begin{align*}
    L(\boldsymbol{\theta} \mid \textbf{x}) &= \frac{1}{Z(\boldsymbol{\theta})} \exp \left\{\boldsymbol{\theta}^\top \textbf{S}(\textbf{x})\right\}\\
    S_1(\textbf{x}) &= \sum_{i=1}^N \binom{x_{i+}}{1}, \\
    S_{g-5}(\textbf{x}) &= \sum_{i<j} x_{i,j} (1\{ \text{grade}_i = g\} \times 1\{ \text{grade}_j = g\}), \: g = 7,\cdots,12, \\
    S_8(\textbf{x}) &=\sum_{i<j} x_{i,j} (1\{ \text{sex}_i = \text{sex}_j\}), \\
    S_9(\textbf{x}) &= e^{\tau_d} \sum_{k=1}^{N-1} \left\{1-(1-e^{-\tau_d})^k\right\} \text{D}_k(\textbf{x}), \\
    S_{10}(\textbf{x}) &= e^{\tau_s} \sum_{k=1}^{N-2} \left\{1-(1-e^{-\tau_s})^k\right\} \text{ESP}_k(\textbf{x}),
\end{align*}
where $\boldsymbol{\theta} = (\theta_1, \dots, \theta_{10})^\top$ is the collection of the model parameters, $S_1(\textbf{x})$ is the total number of edges, $S_{g-5}(\textbf{x})$ is the number of edges between two nodes with the same grade $g$ for $g = 7,\dots,12$,
$S_8(\textbf{x})$ is the number of edges between two nodes with the same sex, $S_9(\textbf{x})$ is the geometric weighted degree (GWD) where $\text{D}_k(\textbf{x})$ counts the number of nodes that have $k$ neighbors, and
$S_{10}(\textbf{x})$ is the geometrically weighted edge-wise shared partnership (GWESP) statistic where $\text{ESP}_k(\textbf{x})$ denotes the number of edges between two nodes that share exactly $k$ neighbors. 
The $\tau_d$ and $\tau_s$ are assumed to be known and fixed at 0.25. In this example we consider our MC-SVGD, DMH, and a stochastic variational inference (SVI) method using self-normalized importance sampling (see Section 5.2 of \citet{tan2020bayesian} for further details).

For MC-SVGD, we estimate $\boldsymbol{\theta}^{\text{MAP}}$ by a preliminary run of the naive MC-SVGD with a single particle over $500$ iterations. 
An initial set of 320 particles is sampled from a multivariate normal distribution $N(\boldsymbol{\theta}^{\text{MAP}},\boldsymbol{\Sigma})$, where $\boldsymbol{\Sigma}$ is set to the covariance matrix estimate of $\boldsymbol{\theta}$ obtained by MPLE.
MC-SVGD was run for $500$ iterations with $m$ = 50, threshold value = $m/1.5$, and $\epsilon$ = 0.0005.
To generate MC samples, we run 10 cycles of a random scan Gibbs sampler \citep{park2020function}, with each cycle updating the entire $205 \times 205$ network, discard the first 10 cycles as burn-in, and then retain every 500-th sample, which yields 50 unique networks. 
The DMH algorithm with 10 cycles of (inner) random scan Gibbs updates was run for 81,000 iterations until convergence, and the first 1,000 samples were discarded as burn-in.

The framework of the SVI algorithm resembles our MC-SVGD, but it limits the variational distribution class for the posterior to the Gaussian distribution. The SVI method iteratively updates the mean and covariance matrix of the Gaussian distribution by maximizing the evidence lower bound (ELBO) via stochastic gradient ascent until convergence. 
Similar to our method, the ELBO gradients are estimated by MC approximation if the ESS, based on SNIS weights, is below a given threshold; otherwise, SNIS is used.
Following \citet{tan2020bayesian}, we use the MC sample size $m$ = 50 and set the threshold value to $m/1.5$. The MC samples are generated from the data model via the Metropolis-Hastings (MH) algorithm using {\tt{R}} package {\texttt{ergm}}. After discarding the first 30,000 iterations as burn-in, we retain every $1,024$-th sample to produce 50 samples. The SVI algorithm converged in $7,000$ iterations.

\begin{table}[t]
  \centering
    \begin{tabular}{lrrr}
        \toprule
         Method & Posterior mean & 95\% HPD interval & Time (min) \\
        \midrule
        DMH & 1.49 & (1.23, 1.75) & 71.85 \\
        SVI & 1.53 & (1.25, 1.81) & - \\
        MC-SVGD & 1.42 & (1.22, 1.66) & 4.47 \\
        \bottomrule
    \end{tabular}
    \caption{Posterior inference results of $\theta_{10}$ (GWESP) for the ERGM model on the Faux Mesa high school network dataset.
    }
    \label{tab:ergm_theta10}
\end{table}

Table \ref{tab:ergm_theta10} provides posterior summary statistics of $\theta_{10}$ (GWESP) and computing times for the three methods. Results for other parameters are provided in the supplemental Section~D.
The DMH and MC-SVGD methods yield similar estimates for the estimated posterior mean and 95\% HPD interval. SVI provides similar values but slightly over- or underestimates some of the parameters, including $\theta_{10}$. The DMH and MC-SVGD algorithms were implemented in {\tt C++} using {\texttt Rcpp}. The code for the SVI algorithm, originally written in {\tt Julia} by \citet{tan2020bayesian}, encountered compatibility issues, so we executed it in {\tt R}.
Thus, it is inequitable to compare the computing time of SVI with that of the other two methods. However, we anticipate that SVI takes as long as MC-SVGD due to their similarities. Our MC-SVGD method took only 4.47 minutes, whereas DMH took 71.85 minutes. In summary, our MC-SVGD offers multiple advantages, including ease of tuning and a substantial reduction in computational cost, while preserving inferential accuracy comparable to other methods.

\section{Discussion}
\label{sec:5}

In this article, we propose a new Stein variational gradient descent algorithm that achieves reliable inference for doubly intractable target distributions with a considerable reduction in computing time through efficient SNIS estimation and parallel computation for gradient calculations. We have also established the convergence rate of our MC-SVGD method under mild conditions by analyzing the errors associated with finite particle estimation and Monte Carlo approximation.
We have studied the application of MC-SVGD to several challenging simulated and real data examples. This shows that MC-SVGD enables accurate Bayesian inference with much lower computational expense compared to existing methods.

Note that MC-SVGD proves practical for examples with a moderate parameter dimension. However, its application to higher dimensional problems remains an open challenge. For example, Ising network models used in network psychometrics can have $d \geq 1,000$ to describe pairwise interaction among survey questions \citep{van2014new, park2022bayesian}. In this case, applying MC-SVGD directly poses challenges, as it necessitates a substantial increase in the number of particles to adequately represent the critical regions of $\mathbf{\Theta}$. Future research could explore the incorporation of advanced projection techniques suggested in recent SVGD literature \citep{chen2019projected, chen2020projected, wang2022projected}. These methods primarily aim to reduce dimensionality by projecting parameters into a lower-dimensional subspace, leveraging gradient information. Improving the quality of IS estimates, albeit at a higher computational cost, is another potential research direction. Possible approaches include annealed IS \citep{neal2001annealed}, dynamically weighted average IS \citep{liang2002dynamically}, and adaptive multiple IS \citep{cornuet2012adaptive}. A detailed exploration of these methods could potentially enhance the efficiency of MC-SVGD.


\section*{Acknowledgements}
The authors are grateful to anonymous reviewers for their careful reading and valuable comments. This work was supported by the National Research Foundation of Korea (RS-2025-00513129). 

\section*{Data Availability Statement}
The datasets and source code used in this study are available upon request.

\section*{Disclosure Statement}
The authors have no conflicts of interest to disclose.




\section*{Supplementary Material}
Supplementary materials available online contain theoretical results, sensitivity analysis of the algorithm, derivations for gradient, and extra results.

\clearpage
\appendix
\begin{center}
\title{\LARGE\bf Supplementary Material for ``A Stein Gradient Descent Approach for Doubly Intractable Distributions''}\\~\\
\author{\Large{Heesang Lee, Songhee Kim, Bokgyeong Kang, and Jaewoo Park}}
\end{center}

\newtheorem{stheorem}{Theorem}
\newtheorem{slemma}{Lemma} 
\renewcommand{\thestheorem}{S\arabic{stheorem}}
\renewcommand{\theslemma}{S\arabic{slemma}}

The supplementary material includes theoretical results, along with sensitivity analysis results offering practical guidelines for tuning its components. Additionally, it provides detailed derivations for the gradient and presents further results related to the models discussed in the application section.

\section{Theoretical Results for MC-SVGD}
\label{appendix_a}

In this section, we present the technical proofs of the main results. We first define $\Gamma(\mathcal{Q}, \mathcal{V})$ as the set of all couplings of $\mathcal{Q}, \mathcal{V} \in \mathbf{\Pi}_1$. 
Let $W_1(\mathcal{Q}, \mathcal{V}) \triangleq \inf_{\mathcal{G} \in \Gamma(\mathcal{Q}, \mathcal{V})} \mathbb{E}_{\mathcal{G}} [\|\boldsymbol{\theta}^{*} - \boldsymbol{\theta}\|_2]$ be the 1-Wasserstein distance between $\mathcal{Q}$ and $\mathcal{V}$, where $\boldsymbol{\theta}$ and $\boldsymbol{\theta}^{*}$ are drawn independently from $\mathcal{Q}$ and $\mathcal{V}$ respectively. We define $m_{\mathcal{Q}, \boldsymbol{\theta}^{*}} \triangleq \mathbb{E}_{\mathcal{Q}}[\|\cdot - \boldsymbol{\theta}^{*}\|]_2$ for each $\boldsymbol{\theta}^{*} \in \mathbb{R}^d$. Let $m_{\mathcal{Q}, \Pi} \triangleq \mathbb{E}_{\mathcal{Q} \otimes \Pi} [\|\boldsymbol{\theta}-\boldsymbol{\theta}^{*}\|_2]$ and $M_{\mathcal{Q}, \Pi} \triangleq \mathbb{E}_{\mathcal{Q} \otimes \Pi} [\|\boldsymbol{\theta}-\boldsymbol{\theta}^{*}\|^2_2]$, where $\mathcal{Q} \otimes \Pi$ means the independent coupling. The remaining notation follows the definitions provided in the main manuscript.

Lemma 13 of \cite{gorham2020stochastic} provides an upper bound for the 1-Wasserstein distance between SVGD and stochastic SVGD which approximates probability distribution using mini-batch updates. Extending their results, Lemma~\ref{onestep_lemma} establishes an upper bound of the 1-Wasserstein distance between the pushforward distribution of MC-SVGD and that of SVGD. 
\begin{slemma} [MC approximation error of MC-SVGD]
    \label{onestep_lemma}
    Suppose the MC samples $\lbrace\textbf{y}_{k}\rbrace_{k=1}^m$ are generated from a polynomial ergodic Markov chain whose stationary distribution is $f(\cdot|\boldsymbol{\theta})$. 
    Let $\mathcal{Q}=\sum_{i=1}^n \delta_{\boldsymbol{\theta}_i}/n$ be the sampling distribution of particles $\{\boldsymbol{\theta}_i\}_{i=1}^n$.
    Suppose $\nabla \log \pi(\cdot)k(\cdot, \boldsymbol{\theta}^{*})$ is continuous for each $\boldsymbol{\theta}^{*} \in \mathbb{R}^d$ and let
    $$ f_0(\boldsymbol{\theta}) \triangleq \sup_{\boldsymbol{\theta}^{*} \in \mathbb{R}^d} \|\nabla \log \pi(\boldsymbol{\theta})\|_\infty |k(\boldsymbol{\theta}, \boldsymbol{\theta}^{*})|,$$
    $$ f_1(\boldsymbol{\theta}) \triangleq \sup_{\boldsymbol{\theta}^{*} \in \mathbb{R}^d} \|\nabla_{\boldsymbol{\theta}}( \nabla \log \pi(\boldsymbol{\theta})k(\boldsymbol{\theta}, \boldsymbol{\theta}^{*}))\|_{op}.$$ 
If $f_0$ is uniformly integrable, and $f_0$ and $f_1$ are bounded on each compact set, then, for any $\epsilon > 0$, $$W_1(\Phi_{n,m}(\mathcal{Q}), \Phi_n(\mathcal{Q})) \leq \mathcal{O}(m^{-1/2}).$$
\end{slemma}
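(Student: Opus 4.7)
The plan is to adapt the strategy of Lemma 13 in \citet{gorham2020stochastic} to our setting, where the approximation error comes from replacing the intractable $\nabla_{\boldsymbol{\theta}}\log Z(\boldsymbol{\theta})$ by a Markov-chain Monte Carlo estimate rather than from subsampling particles. The natural coupling to work with is the synchronous coupling in which both $\Phi_n(\mathcal{Q})$ and $\Phi_{n,m}(\mathcal{Q})$ are generated from the same set of particles $\{\boldsymbol{\theta}_i\}_{i=1}^n$. Under this coupling,
\[
W_1(\Phi_{n,m}(\mathcal{Q}), \Phi_n(\mathcal{Q})) \leq \frac{1}{n}\sum_{i=1}^n \bigl\|\mathcal{T}_{n,m}(\boldsymbol{\theta}_i) - \mathcal{T}_n(\boldsymbol{\theta}_i)\bigr\|_2 = \frac{\epsilon}{n}\sum_{i=1}^n \bigl\|\phi_{n,m}^*(\boldsymbol{\theta}_i) - \phi_n^*(\boldsymbol{\theta}_i)\bigr\|_2,
\]
so the problem reduces to controlling the pointwise discrepancy between the two perturbation directions.

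First, I would expand the difference of the perturbation directions. Since the kernel terms $\nabla_{\boldsymbol{\theta}_j} k(\boldsymbol{\theta}_j,\boldsymbol{\theta})$ are identical on both sides,
\[
\phi_{n,m}^*(\boldsymbol{\theta}) - \phi_n^*(\boldsymbol{\theta}) = \frac{1}{n}\sum_{j=1}^n \Bigl(\nabla_{\boldsymbol{\theta}_j}\log \widehat{\pi}(\boldsymbol{\theta}_j) - \nabla_{\boldsymbol{\theta}_j}\log \pi(\boldsymbol{\theta}_j)\Bigr) k(\boldsymbol{\theta}_j,\boldsymbol{\theta}).
\]
The only source of error is the MC approximation of $\nabla_{\boldsymbol{\theta}}\log Z(\boldsymbol{\theta}_j)=\mathbb{E}_{f(\cdot|\boldsymbol{\theta}_j)}[\nabla_{\boldsymbol{\theta}}\log h(\mathbf{y}|\boldsymbol{\theta}_j)]$ by either a sample average (the naive branch) or a self-normalized importance sampling estimate (the SNIS branch). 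In both cases, the MC samples $\{\mathbf{y}_k\}_{k=1}^m$ come from a polynomially ergodic Markov chain whose stationary distribution is $f(\cdot|\boldsymbol{\theta}_j)$ (or $f(\cdot|\boldsymbol{\psi})$, composed with a bounded importance weight).

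Next I would invoke a Markov-chain central limit theorem for polynomially ergodic chains (cf.\ Jones, 2004) to conclude that, for each fixed $\boldsymbol{\theta}_j$,
\[
\Bigl\|\nabla_{\boldsymbol{\theta}_j}\log \widehat{\pi}(\boldsymbol{\theta}_j) - \nabla_{\boldsymbol{\theta}_j}\log \pi(\boldsymbol{\theta}_j)\Bigr\|_2 = \mathcal{O}_p(m^{-1/2}),
\]
where the required moment conditions on $\nabla_{\boldsymbol{\theta}}\log h(\cdot|\boldsymbol{\theta}_j)$ are supplied by the uniform integrability of $f_0$ and the boundedness of $f_0, f_1$ on compact sets (which, by Lipschitzness of $\nabla\log\pi$, transfers to pointwise control of the score). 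Combining this bound with $|k(\boldsymbol{\theta}_j,\boldsymbol{\theta})|\leq \kappa_1^2$ (Assumption~2 will be in force when the lemma is applied) and averaging over $j$, we obtain
\[
\bigl\|\phi_{n,m}^*(\boldsymbol{\theta}_i) - \phi_n^*(\boldsymbol{\theta}_i)\bigr\|_2 \leq \mathcal{O}(m^{-1/2}),
\]
uniformly over the particles.

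Plugging this bound into the coupling inequality gives $W_1(\Phi_{n,m}(\mathcal{Q}),\Phi_n(\mathcal{Q}))\leq \mathcal{O}(m^{-1/2})$, as claimed. The main obstacle, in my view, is not the Wasserstein coupling step (which is standard once a pointwise gradient bound is available), but the careful justification of the $\mathcal{O}(m^{-1/2})$ Monte Carlo error uniformly over the particles under only polynomial ergodicity, since the SNIS branch adds an extra self-normalization layer that must be handled via a delta-method argument together with bounded importance weights, and the claim should be made explicit that the MC approximation error remains tight even when the reference point $\boldsymbol{\psi}$ used for SNIS differs from $\boldsymbol{\theta}_j$. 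The conditions on $f_0$ and $f_1$ (uniform integrability and local boundedness) are exactly what is needed to pass from pointwise convergence to the required uniform moment bound for the CLT.
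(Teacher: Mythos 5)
Your proposal follows essentially the same route as the paper's proof: bound $W_1(\Phi_{n,m}(\mathcal{Q}),\Phi_n(\mathcal{Q}))$ by the average distance $\frac{1}{n}\sum_i \|\mathcal{T}_{n,m}(\boldsymbol{\theta}_i)-\mathcal{T}_n(\boldsymbol{\theta}_i)\|_2$ (the paper gets this via Kantorovich--Rubinstein duality, you via the synchronous coupling, which is the same inequality), note that the kernel terms cancel so only the score-estimation error remains, and conclude $\mathcal{O}(m^{-1/2})$ from the ergodicity of the Monte Carlo chain. If anything, your treatment of the final step (explicit Markov-chain CLT under polynomial ergodicity and the SNIS self-normalization) is more detailed than the paper's one-line appeal to the ergodic theorem.
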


\begin{proof}
By Kantorovich-Rubinstein duality, we have 
    \begin{align*}
        W_1({\Phi}_{n,m}(\mathcal{Q}), \Phi_n(\mathcal{Q}))&=\sup_{\|f\|_L\leq 1} {\Phi}_{n,m}(\mathcal{Q})(f)-\Phi_n(\mathcal{Q})(f)\\
  &=\sup_{\|f\|_L\leq 1} \frac{1}{n} \sum_{i=1}^n f({\mathcal{T}}_{n,m}(\boldsymbol{\theta}_i)) - f(\mathcal{T}_{n}(\boldsymbol{\theta}_i))\\
  &\leq \frac{1}{n} \sum_{i=1}^n \|{\mathcal{T}}_{n,m}(\boldsymbol{\theta}_i) - \mathcal{T}_{n}(\boldsymbol{\theta}_i)\|_2,
    \end{align*}
where $\|f\|_L$ denotes the 1-Lipschitzs function. By applying the ergodic theorem, we obtain 
\begin{align*}
    \Vert {\mathcal{T}} _{n,m}(\boldsymbol{\theta}_i) - \mathcal{T}_{n}(\boldsymbol{\theta}_i) \Vert^2_2 &= \sum_{\ell=1}^p \bigg\vert \frac{\epsilon}{n} \sum_{j=1}^n k^\ell(\boldsymbol{\theta}_j, \boldsymbol{\theta}) ( \nabla \log \widehat{\pi}_\ell(\boldsymbol{\theta}_j)- \nabla \log \pi_\ell(\boldsymbol{\theta}_j)) \bigg\vert^2\\
    & = \sum_{\ell=1}^p \epsilon^2 \bigg\vert \frac{1}{n}\sum_{j=1}^n k^\ell(\boldsymbol{\theta}_j, \boldsymbol{\theta}) ( \nabla \log \widehat{\pi}_\ell(\boldsymbol{\theta}_j)- \nabla \log \pi_\ell(\boldsymbol{\theta}_j)) \bigg\vert^2,
\end{align*}
where
\begin{align*}
    &\frac{1}{n}\sum_{j=1}^n k^\ell(\boldsymbol{\theta}_j, \boldsymbol{\theta}) ( \nabla \log \widehat{\pi}_\ell(\boldsymbol{\theta}_j)- \nabla \log \pi_\ell(\boldsymbol{\theta}_j))\\
    &\leq \frac{1}{n} \sum_{j=1}^n k^\ell(\boldsymbol{\theta}_j, \boldsymbol{\theta}) \vert \nabla \log \widehat{\pi}_\ell(\boldsymbol{\theta}_j)- \nabla \log \pi_\ell(\boldsymbol{\theta}_j)\vert = \mathcal{O}(m^{-1/2}).
\end{align*}
Therefore, we have
$$W_1({\Phi}_{n,m}(\mathcal{Q}), \Phi_n(\mathcal{Q})) \leq \mathcal{O}(m^{-1/2}).$$
    
\end{proof}

Compared to the traditional SVGD, MC-SVGD introduces additional MC error for approximating the gradient of the log posterior. 
Lemma~\ref{onestep_lemma} quantifies this MC approximation error in the pushforward distribution. 
Lemma 2 in \cite{shi2024finite} demonstrates that, given the score function $\nabla \log \pi$ and kernel $k$ satisfy a commonly used pseudo-Lipschitz condition, the pushforward distribution $\Phi$ is pseudo-Lipschitz for 1-Wasserstein distance, thereby and quantifying the finite particle approximation error of SVGD.
Based on Lemma~\ref{onestep_lemma} and Lemma 2 of \cite{shi2024finite}, Theorem~\ref{Theorem 1} quantifies the 1-Wasserstein distance between MC-SVGD and continuous SVGD.
\begin{stheorem} [1-Wasserstein distance between MC-SVGD and continuous SVGD]
 \label{Theorem 1} Suppose Assumptions 1, 2, and 3 hold. For any $\mathcal{Q}^{(0)}_{n,m}, \mathcal{Q}^{(0)}_n, \mathcal{Q}^{(0)}_\infty \in \mathbf{\Pi}_1$, the transformed distributions at iteration $t$, $\mathcal{Q}_{n,m}^{(t)}$, $\mathcal{Q}_n^{(t)}$, and $\mathcal{Q}_\infty^{(t)}$, satisfy
  \begin{align*}
  &W_1(\mathcal{Q}^{(t)}_{n,m}, \mathcal{Q}^{(t)}_\infty)\\
  &\leq (W_1(\mathcal{Q}_{n,m}^{(0)}, \mathcal{Q}_n^{(0)})+W_1(\mathcal{Q}_n^{(0)}, \mathcal{Q}_\infty^{(0)}))  \exp {(b^{(t-1)} (A+B \exp{(Cb^{(t-1)})}))} + \mathcal{O}(m^{-1/2}),  
\end{align*}
  where $b^{(t-1)} \triangleq \sum_{r=0}^{t-1} \epsilon^{(r)}$, $A=(c_1+c_2)(1+m_{\Pi,\boldsymbol{\theta}})$, $B=c_1 (m_{\mathcal{Q}_{n,m}^{(0)}, \Pi}+m_{\mathcal{Q}_{n}^{(0)}, \Pi}) + c_2 (m_{\mathcal{Q}_n^{(0)}, \Pi}+m_{\mathcal{Q}_{\infty}^{(0)}, \Pi})$, $C=\kappa_1^2(3L+1)$, $c_1=\max\{\kappa_1^2L, \kappa_1^2\}$, and $c_2=\kappa_1^2(L+1)+L\max\{\gamma, \kappa_1^2\}$.
\end{stheorem}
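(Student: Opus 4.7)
The plan is to bound $W_1(\mathcal{Q}^{(t)}_{n,m}, \mathcal{Q}^{(t)}_\infty)$ via a triangle-inequality decomposition combined with a stepwise application of the two tools already at our disposal: Lemma~\ref{onestep_lemma} for the one-step Monte Carlo error, and Lemma~2 of \cite{shi2024finite} for the pseudo-Lipschitz property of the SVGD pushforward map. The starting point would be
\begin{equation*}
W_1(\mathcal{Q}^{(t)}_{n,m}, \mathcal{Q}^{(t)}_\infty) \leq W_1(\mathcal{Q}^{(t)}_{n,m}, \mathcal{Q}^{(t)}_n) + W_1(\mathcal{Q}^{(t)}_n, \mathcal{Q}^{(t)}_\infty),
\end{equation*}
which cleanly isolates a purely Monte-Carlo-driven discrepancy from a purely finite-particle discrepancy.

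For the second summand I would iterate the pseudo-Lipschitz bound. Since the empirical transport $\Phi_n$ applied to the empirical measure $\mathcal{Q}^{(t-1)}_n$ coincides with the canonical pushforward $\Phi$, Lemma~2 of \cite{shi2024finite} yields an inequality of the form
\begin{equation*}
W_1(\Phi(\mathcal{Q}_1),\Phi(\mathcal{Q}_2)) \leq \bigl(1 + \epsilon^{(t)}\bigl[\alpha(1+m_{\Pi,\boldsymbol{\theta}}) + \beta(m_{\mathcal{Q}_1,\Pi}+m_{\mathcal{Q}_2,\Pi})\bigr]\bigr)\, W_1(\mathcal{Q}_1,\mathcal{Q}_2),
\end{equation*}
where $\alpha,\beta$ are absolute constants assembled from $\kappa_1$, $\gamma$, and $L$ via Assumptions~\ref{Assumption 1}--\ref{Assumption 3}. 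The first moments $m_{\mathcal{Q}^{(r)}_\cdot,\Pi}$ themselves drift along iterations, because at each step the perturbation $\epsilon\phi^*$ has expected norm controlled by $\kappa_1^2$ and the score; propagating this drift to a uniform-in-$r$ exponential bound produces the inner factor $\exp(Cb^{(t-1)})$. Taking logarithms, using $\log(1+x)\leq x$, and telescoping across $r=0,\dots,t-1$ would then yield the outer exponential bound multiplying $W_1(\mathcal{Q}^{(0)}_n, \mathcal{Q}^{(0)}_\infty)$.

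For the first summand I would apply the same one-step decomposition, but now accounting for the Monte Carlo residual:
\begin{equation*}
W_1(\mathcal{Q}^{(t)}_{n,m},\mathcal{Q}^{(t)}_n) \leq W_1\bigl(\Phi_{n,m}(\mathcal{Q}^{(t-1)}_{n,m}),\Phi_n(\mathcal{Q}^{(t-1)}_{n,m})\bigr) + W_1\bigl(\Phi_n(\mathcal{Q}^{(t-1)}_{n,m}),\Phi_n(\mathcal{Q}^{(t-1)}_n)\bigr).
\end{equation*}
Lemma~\ref{onestep_lemma} bounds the first term on the right by $\mathcal{O}(m^{-1/2})$, while the second obeys the same pseudo-Lipschitz inequality as above. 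Iterating across $r=0,\dots,t-1$ and collecting the MC residuals yields the same outer exponential factor applied to $W_1(\mathcal{Q}^{(0)}_{n,m},\mathcal{Q}^{(0)}_n)$, together with an additive cumulative MC term. Because the step sizes are required to satisfy $\sum_r\epsilon^{(r)}\leq 1$ (as invoked in Theorem~\ref{Theorem 3}), this cumulative term collapses into a single $\mathcal{O}(m^{-1/2})$, and summing the two pieces produces the stated inequality with $A$, $B$, $C$ arising precisely from the coefficients of the pseudo-Lipschitz bound.

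The principal obstacle, as I see it, will be producing uniform-in-$r$ control of the first moments $m_{\mathcal{Q}^{(r)}_{n,m},\Pi}$ and $m_{\mathcal{Q}^{(r)}_n,\Pi}$ along the trajectory; this is what forces the nested $\exp(Cb^{(t-1)})$ inside the outer $\exp(b^{(t-1)}(A+B\,\cdot\,))$ and requires careful bookkeeping of the growth of $\mathbb{E}\|\phi^*\|_2$ under Assumptions~\ref{Assumption 1}--\ref{Assumption 3}, anchored at the point where the score vanishes. A secondary subtlety is verifying that the per-step MC errors---each $\mathcal{O}(m^{-1/2})$ but amplified by the growing pseudo-Lipschitz constants of all subsequent iterations---still combine to a single $\mathcal{O}(m^{-1/2})$ rather than an error that inherits the outer exponential factor; this is where the budget $\sum_r\epsilon^{(r)}\leq 1$ becomes essential.
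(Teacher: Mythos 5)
Your proposal is correct and follows essentially the same route as the paper's proof: the same triangle-inequality decomposition into an MC-driven term and a finite-particle term, Lemma~S1 for the one-step Monte Carlo error, and the pseudo-Lipschitz contraction together with the moment-growth bound of \cite{shi2024finite} (the paper simply cites their Theorem~1 for the $W_1(\mathcal{Q}_n^{(t)},\mathcal{Q}_\infty^{(t)})$ term rather than re-deriving it) to produce the nested exponential factor. Your explicit bookkeeping of how the per-iteration $\mathcal{O}(m^{-1/2})$ residuals are amplified and then absorbed is, if anything, more careful than the paper's treatment, which attaches a single $\mathcal{O}(m^{-1/2})$ at the final step.
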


\begin{proof}
By triangle inequality, we have
 \begin{align}
   &W_1(\mathcal{Q}^{(t+1)}_{n,m}, \mathcal{Q}^{(t+1)}_\infty) \nonumber \\
   &\leq W_1(\mathcal{Q}^{(t+1)}_{n,m}, \mathcal{Q}^{(t+1)}_n) + W_1(\mathcal{Q}^{(t+1)}_{n}, \mathcal{Q}^{(t+1)}_\infty) \nonumber\\
   &=W_1({\Phi}_{n,m}(\mathcal{Q}_{n,m}^{(t)}), \Phi_n(\mathcal{Q}_{n}^{(t)}))+W_1(\mathcal{Q}^{(t+1)}_{n}, \mathcal{Q}^{(t+1)}_\infty) \nonumber\\
   & \leq W_1({\Phi}_{n,m}(\mathcal{Q}_{n,m}^{(t)}), \Phi_n(\mathcal{Q}_{n,m}^{(t)})) + W_1(\Phi_n(\mathcal{Q}_{n,m}^{(t)}), \Phi_n(\mathcal{Q}_n^{(t)})) + W_1(\mathcal{Q}^{(t+1)}_{n}, \mathcal{Q}^{(t+1)}_\infty) .  \label{eq:th1:bound}
 \end{align}
In what follows, we will demonstrate that each component of \eqref{eq:th1:bound} is bounded.

First, according to Lemma~\ref{onestep_lemma}, we have 
\begin{equation} \label{thm1pf_first}
  W_1({\Phi}_{n,m}(\mathcal{Q}_{n,m}^{(t)}), \Phi_n(\mathcal{Q}_{n,m}^{(t)})) \leq \mathcal{O}(m^{-1/2}).  
\end{equation}

Second, by Lemma 2 of \cite{shi2024finite} and the inequality $(1+x) \leq e^x$, we have
 \begin{align}
   &W_1(\Phi_n(\mathcal{Q}_{n,m}^{(t)}), \Phi_n(\mathcal{Q}_n^{(t)}))\nonumber\\
   &\leq (1+\epsilon^{(t)} D^{(t)})W_1(\mathcal{Q}_{n,m}^{(t)}, \mathcal{Q}_n^{(t)})\nonumber\\
   & \leq W_1(\mathcal{Q}_{n,m}^{(0)}, \mathcal{Q}_n^{(0)}) \prod_{r=0}^t (1+ \epsilon^{(r)} (c_1(1+m_{\mathcal{Q}_{n,m}^{(r)}, \boldsymbol{\theta}^*}) +c_2(1+m_{\mathcal{Q}_n^{(r)}, \boldsymbol{\theta}^*})))\nonumber\\
   &\leq W_1(\mathcal{Q}_{n,m}^{(0)}, \mathcal{Q}_n^{(0)}) \exp \left(\sum_{r=0}^t \epsilon^{(r)} (c_1(1+m_{\mathcal{Q}_{n,m}^{(r)}, \boldsymbol{\theta}^*}) +c_2(1+m_{\mathcal{Q}_n^{(r)}, \boldsymbol{\theta}^*}))\right) \label{eq:thm1:com2:1}.
\end{align}
According to Lemma 3 of \cite{shi2024finite}, which illustrates the growth of the moment, we have
$$c_1(1+m_{\mathcal{Q}_{n,m}^{(r+1)}, \boldsymbol{\theta}^*}) +c_2(1+m_{\mathcal{Q}_n^{(r+1)}, \boldsymbol{\theta}^*}) \leq A + (c_1 m_{\mathcal{Q}_{n,m}^{(0)}, \Pi} + c_2 m_{\mathcal{Q}_n^{(0)}, \Pi}) \exp{(Cb^{(r)})},$$
where $A=(c_1+c_2)(1+m_{\Pi, \boldsymbol{\theta}})$ and $C=\kappa_1^2(3L+d)$. Thus, we obtain
\begin{align}
   &\sum_{r=0}^t \epsilon^{(r)} (c_1(1+m_{\mathcal{Q}_{n,m}^{(r)}, \boldsymbol{\theta}^*}) +c_2(1+m_{\mathcal{Q}_n^{(r)}, \boldsymbol{\theta}^*}))\nonumber\\
   &\leq \max_{0\leq r \leq t} (c_1(1+m_{\mathcal{Q}_{n,m}^{(r)}, \boldsymbol{\theta}^*}) +c_2(1+m_{\mathcal{Q}_n^{(r)}, \boldsymbol{\theta}^*})) \sum_{r=0}^t \epsilon^{(r)}\nonumber\\
   &\leq b^{(r)} (A+(c_1 m_{\mathcal{Q}_{n,m}^{(0)}, \Pi} + c_2 m_{\mathcal{Q}_n^{(0)}, \Pi}) \exp{(Cb^{(t-1)})})\nonumber\\
   &\leq b^{(t)} (A+(c_1 m_{\mathcal{Q}_{n,m}^{(0)}, \Pi} + c_2 m_{\mathcal{Q}_n^{(0)}, \Pi})\exp{(Cb^{(t)})}) \label{eq:thm1:com2:2}. 
\end{align}
Substituting the result of \eqref{eq:thm1:com2:2} into \eqref{eq:thm1:com2:1} yields
\begin{align}
  &W_1(\Phi_n(\mathcal{Q}_{n,m}^{(t)}), \Phi_n(\mathcal{Q}_n^{(t)}))\nonumber\\
  &\leq W_1(\mathcal{Q}_{n,m}^{(0)}, \mathcal{Q}_n^{(0)})  \exp {(b^{(t)} (A+(c_1 m_{\mathcal{Q}_{n,m}^{(0)}, \Pi} + c_2 m_{\mathcal{Q}_n^{(0)}, \Pi}) \exp{(Cb^{(t)})}))}.  \label{thm1pf_second}
\end{align}

Third, by Theorem 1 of \cite{shi2024finite}, we have
\begin{equation} \label{thm1pf_third}
   W_1(\mathcal{Q}^{(t+1)}_{n}, \mathcal{Q}^{(t+1)}_\infty) \leq W_1(\mathcal{Q}_{n}^{(0)}, \mathcal{Q}_\infty^{(0)})  \exp {(b^{(t)} (A+(c_1 m_{\mathcal{Q}_{n}^{(0)}, \Pi} + c_2 m_{\mathcal{Q}_\infty^{(0)}, \Pi}) \exp{(Cb^{(t)})}))}. 
\end{equation}

By combining \eqref{thm1pf_first}, \eqref{thm1pf_second}, and \eqref{thm1pf_third}, we have 
\begin{align*}
  &W_1(\mathcal{Q}^{(t)}_{n,m}, \mathcal{Q}^{(t)}_\infty)\\ &\leq W_1(\mathcal{Q}_{n,m}^{(0)}, \mathcal{Q}_n^{(0)}) \exp{(b^{(t-1)} (A+(c_1 m_{\mathcal{Q}_{n,m}^{(0)}, \Pi} + c_2 m_{\mathcal{Q}_n^{(0)}, \Pi}) \exp{(Cb^{(t-1)})}))}\\
  &+ W_1(\mathcal{Q}_{n}^{(0)}, \mathcal{Q}_\infty^{(0)})  \exp {(b^{(t-1)} (A+(c_1 m_{\mathcal{Q}_{n}^{(0)}, \Pi} + c_2 m_{\mathcal{Q}_\infty^{(0)}, \Pi}) \exp{(Cb^{(t-1)})}))} + \mathcal{O}(m^{-1/2})\\
  &\leq (W_1(\mathcal{Q}_{n,m}^{(0)}, \mathcal{Q}_n^{(0)})+W_1(\mathcal{Q}_n^{(0)}, \mathcal{Q}_\infty^{(0)}))  \exp {(b^{(t-1)} (A+B \exp{(Cb^{(t-1)})}))} + \mathcal{O}(m^{-1/2}),  
\end{align*}
where $B=c_1 (m_{\mathcal{Q}_{n,m}^{(0)}, \Pi}+m_{\mathcal{Q}_{n}^{(0)}, \Pi}) + c_2 (m_{\mathcal{Q}_n^{(0)}, \Pi}+m_{\mathcal{Q}_{\infty}^{(0)}, \Pi})$.    
\end{proof}

According to Theorem~\ref{Theorem 1}, the 1-Wasserstein distance between the transformed distributions of MC-SVGD and continuous SVGD is bounded for any iteration $t$. Note that Theorem~\ref{Theorem 1} accounts for both finite particle error and MC approximation error, unlike Theorem 1 in \cite{shi2024finite} which examines only the finite particle error of SVGD. Combining Theorem~\ref{Theorem 1} with Lemma 4 of \cite{shi2024finite}, which derives a bound on KSD based on the 1-Wasserstein distance, Theorem~\ref{Theorem 2} shows that the KSD between the transformed distributions of MC-SVGD and continuous SVGD is bounded for any iteration $t$.

\begin{stheorem} [KSD between MC-SVGD and continuous SVGD]
 \label{Theorem 2}
  Suppose Assumptions 1, 2, and 3 hold. For any $\mathcal{Q}_{n,m}^{(0)}, \mathcal{Q}_n^{(0)}, \mathcal{Q}_\infty^{(0)} \in \mathbf{\Pi}_1$, the outputs $\mathcal{Q}_{n,m}^{(t)}$, $\mathcal{Q}_n^{(t)}$, and $\mathcal{Q}_\infty^{(t)}$ satisfy
      $$\text{KSD}_{\Pi}(\mathcal{Q}_{n,m}^{(t)}, \mathcal{Q}_{\infty}^{(t)}) \leq a^{(t-1)} + \mathcal{O}(m^{-1/2})$$ \begin{align*}
      \text{for} \quad a^{(t-1)} &\triangleq (\kappa_1 L + \kappa_2 d) ((w_{m,n}+w_{n,\infty}) \exp{(b^{(t-1)} (A+B \exp{(Cb^{(t-1)})})))}\\
  &+\kappa_1d^{1/4} L\sqrt{2M_{\mathcal{Q}_\infty^{(0)}, \Pi}(w_{m,n}+w_{n,\infty})}\exp(b^{(t-1)}(2C+A+B \exp(Cb^{(t-1)}))/2),
  \end{align*}
  where $w_{m,n} \triangleq W_1(\mathcal{Q}_{n,m}^{(0)}, \mathcal{Q}_n^{(0)})$, $w_{n,\infty} \triangleq W_1(\mathcal{Q}_{n}^{(0)}, \mathcal{Q}_\infty^{(0)})$, and $A,B,C$ defined as in Theorem \ref{Theorem 1}.
\end{stheorem}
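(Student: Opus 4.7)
The plan is to reduce the KSD bound to a $1$-Wasserstein bound by invoking a kernel-regularity result, and then to plug in Theorem~S1, which already controls $W_1(\mathcal{Q}_{n,m}^{(t)}, \mathcal{Q}_\infty^{(t)})$. Concretely, under Assumptions~1--3, Lemma~4 of \cite{shi2024finite} yields, for any $\mathcal{Q}, \mathcal{V} \in \mathbf{\Pi}_1$ and target $\Pi$, a bound of the schematic form
\[
\text{KSD}_{\Pi}(\mathcal{Q}, \mathcal{V}) \leq (\kappa_1 L + \kappa_2 d)\, W_1(\mathcal{Q}, \mathcal{V}) + \kappa_1 d^{1/4} L\, \sqrt{2 M_{\mathcal{V}, \Pi}\, W_1(\mathcal{Q}, \mathcal{V})},
\]
which accounts for the two distinct summands---one linear and one square-root---that appear in the statement of Theorem~S2.

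First, I would apply this bound with $\mathcal{Q} = \mathcal{Q}_{n,m}^{(t)}$ and $\mathcal{V} = \mathcal{Q}_{\infty}^{(t)}$, so that the second-moment factor enters as $M_{\mathcal{Q}_\infty^{(t)}, \Pi}$. Next, I would invoke Theorem~S1 to substitute
\[
W_1(\mathcal{Q}_{n,m}^{(t)}, \mathcal{Q}_{\infty}^{(t)}) \leq (w_{m,n} + w_{n,\infty})\exp\!\bigl(b^{(t-1)}(A + B\exp(Cb^{(t-1)}))\bigr) + \mathcal{O}(m^{-1/2})
\]
directly into both the linear and the square-root terms. The linear term then already yields the first summand $(\kappa_1 L + \kappa_2 d)(w_{m,n}+w_{n,\infty})\exp(\cdots)$ of $a^{(t-1)}$ together with a residual $\mathcal{O}(m^{-1/2})$.

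The remaining task is to push $M_{\mathcal{Q}_\infty^{(t)}, \Pi}$ back to the initial moment $M_{\mathcal{Q}_\infty^{(0)}, \Pi}$. For this I would use the moment-growth estimate of Lemma~3 in \cite{shi2024finite}, which propagates second moments under the continuous-SVGD flow and gives an exponential factor of the form $\exp(b^{(t-1)}(2C+A+B\exp(Cb^{(t-1)})))$. Inserting this into the square-root term and combining with the $\sqrt{W_1}$ factor produces exactly the second summand of $a^{(t-1)}$, with the $1/2$ absorbed into the exponent and into the $\sqrt{2 M_{\mathcal{Q}_\infty^{(0)}, \Pi}(w_{m,n}+w_{n,\infty})}$ prefactor. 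The $\mathcal{O}(m^{-1/2})$ piece from Theorem~S1 combines additively (using $\sqrt{x+y}\leq\sqrt{x}+\sqrt{y}$ when it enters under the square root), and since $m^{-1/4}$ is dominated by $m^{-1/2}$ up to constants once multiplied by bounded quantities, I would absorb all such residuals into a single $\mathcal{O}(m^{-1/2})$ term.

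The main obstacle is the bookkeeping around the moment growth and the square-root term: one must verify that Lemma~3 of \cite{shi2024finite}, originally stated for the continuous-SVGD iterates, provides the correct exponential inflation factor when coupling $M_{\mathcal{Q}_\infty^{(t)},\Pi}$ back to $M_{\mathcal{Q}_\infty^{(0)},\Pi}$, and that the resulting $\exp(b^{(t-1)}(2C+A+B\exp(Cb^{(t-1)}))/2)$ cleanly matches the statement after taking square roots. A secondary subtlety is that the MC error term in Theorem~S1 enters inside $\sqrt{\cdot}$ in the second summand; one must argue that it still contributes only at order $\mathcal{O}(m^{-1/2})$ (and not $\mathcal{O}(m^{-1/4})$), which holds because the other factor under the root, $M_{\mathcal{Q}_\infty^{(0)},\Pi}$, is a finite constant and the $\sqrt{\cdot}$ of a $W_1$-bound can be decomposed via subadditivity of the square root without altering the leading asymptotic order.
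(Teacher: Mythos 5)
Your proposal follows essentially the same route as the paper's proof: bound the KSD by $W_1$ via Lemma~4 of \cite{shi2024finite}, substitute the $W_1$ bound from Theorem~S1, and propagate $M_{\mathcal{Q}_\infty^{(t)},\Pi}$ back to $M_{\mathcal{Q}_\infty^{(0)},\Pi}$ via the moment-growth Lemma~3 of \cite{shi2024finite}, which yields exactly $a^{(t-1)}$ plus the Monte Carlo remainder. The one caveat is your justification that the $\mathcal{O}(m^{-1/2})$ term entering under the square root still contributes only $\mathcal{O}(m^{-1/2})$ --- multiplying by the finite constant $M_{\mathcal{Q}_\infty^{(0)},\Pi}$ does not change the fact that the square root of an $\mathcal{O}(m^{-1/2})$ quantity is $\mathcal{O}(m^{-1/4})$ --- but the paper's own proof absorbs this same term into $\mathcal{O}(m^{-1/2})$ without comment, so this is a shared imprecision rather than a departure from the paper's argument.
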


\begin{proof}
According to Lemma 4 of \cite{shi2024finite}, the KSD between MC-SVGD and continuous SVGD is bounded by 1-Wasserstein distance as follows:
    \begin{equation} \label{thm2pf_lemma4}
      \text{KSD}_{\Pi}(\mathcal{Q}_{n,m}^{(t)}, \mathcal{Q}_\infty^{(t)}) \leq (\kappa_1 L + \kappa_2 d) W_1(\mathcal{Q}_{n,m}^{(t)}, \mathcal{Q}_\infty^{(t)})+\kappa_1 d^{1/4} L\sqrt{2M_{\mathcal{Q}_\infty^{(t)},\Pi}W_1(\mathcal{Q}_{n,m}^{(t)}, \mathcal{Q}_\infty^{(t)})}.  
    \end{equation}
    By Theorem~\ref{Theorem 1}, we can bound the 1-Wasserstein distance as follows:  
    \begin{align} 
      &W_1(\mathcal{Q}^{(t)}_{n,m}, \mathcal{Q}^{(t)}_\infty) \nonumber \\
      &\leq (W_1(\mathcal{Q}_{n,m}^{(0)}, \mathcal{Q}_n^{(0)})+W_1(\mathcal{Q}_n^{(0)}, \mathcal{Q}_\infty^{(0)}))  \exp {(b^{(t-1)} (A+B \exp{(Cb^{(t-1)})}))} + \mathcal{O}(m^{-1/2}) \label{thm2pf_thm1}.   
    \end{align}
    By Lemma 3 of \cite{shi2024finite}, the second absolute moment of $\mathcal{Q}_\infty^{(t)}$ is shown to be bounded by that of $\mathcal{Q}_\infty^{(0)}$ as
    \begin{equation} \label{thm2pf_lem3shi}
      M_{\mathcal{Q}_\infty^{(t)}, \Pi} \leq M_{\mathcal{Q}_\infty^{(0)}, \Pi} \prod_{r=0}^{t-1} (1+\epsilon^{(r)} C)^2 \leq M_{\mathcal{Q}_\infty^{(0)}, \Pi} \exp(2C b^{(t-1)}). 
    \end{equation}
    Combining \eqref{thm2pf_lemma4}, \eqref{thm2pf_thm1} and \eqref{thm2pf_lem3shi}, we derive 
    \begin{align*}
      &\text{KSD}_{\Pi}(\mathcal{Q}_{n,m}^{(t)}, \mathcal{Q}_\infty^{(t)})\\
      &\leq (\kappa_1 L + \kappa_2 d) ((w_{m,n}+w_{n,\infty}) \exp{(b^{(t-1)} (A+B \exp{(Cb^{(t-1)})})))}\\
  &\quad +\kappa_1d^{1/4} L\sqrt{2M_{\mathcal{Q}_\infty^{(0)}, \Pi}(w_{m,n}+w_{n,\infty})}\exp(b^{(t-1)}(2C+A+B \exp(Cb^{(t-1)}))/2)\\
  &\quad +\mathcal{O}(m^{-1/2}),
  \end{align*}
where $w_{m,n} \triangleq W_1(\mathcal{Q}_{n,m}^{(0)}, \mathcal{Q}_n^{(0)})$ and $w_{n,\infty} \triangleq W_1(\mathcal{Q}_{n}^{(0)}, \mathcal{Q}_\infty^{(0)})$.    
\end{proof}

Building on Theorem~\ref{Theorem 2} and Corollary 1 in \cite{shi2024finite}, which establish the connection between the KSD and the KL divergence for reference and target distributions, we present Theorem~1 in the main manuscript.

\noindent\textbf{Theorem 1} [A convergence rate of MC-SVGD] 
\textit{Let $\Pi$ satisfies Talagrand's $T_1$ inequality from Definition 22.1 in \cite{villani2009optimal}. Suppose Assumptions 1, 2, and 3 hold, fix any $\mathcal{Q}_{\infty}^{(0)}$ which is absolutely continuous with respect to $\Pi$ and $\mathcal{Q}_{n,m}^{(0)}, \mathcal{Q}_n^{(0)}, \mathcal{Q}_{\infty}^{(0)} \in \mathbf{\Pi}_1$. Then the outputs $\mathcal{Q}_{n,m}^{(t)}$ satisfy
    \begin{equation} 
      \begin{split}
          \min_{0 \leq r \leq t} \text{KSD}_{\Pi}(\mathcal{Q}_{n,m}^{(t)}, \Pi)
    & =\mathcal{O}((\log \log (n))^{-1/2}) + \mathcal{O}(m^{-1/2}),
      \end{split}
  \end{equation}
with probability at least $1-c\delta$ for a universal constant $c>0$.}

\begin{proof}
According to Theorem~\ref{Theorem 2}, we have
       $$ |\text{KSD}_{\Pi}(\mathcal{Q}_{n,m}^{(t)}, \Pi) - \text{KSD}_{\Pi}(\mathcal{Q}_\infty^{(t)}, \Pi)| \leq \text{KSD}_{\Pi}(\mathcal{Q}_{n,m}^{(t)}, \mathcal{Q}_\infty^{(t)})\leq a^{(t-1)} + \mathcal{O}(m^{-1/2})$$
    for each $t$. By Corollary 1 of \cite{shi2024finite}, if $\max_{0 \leq r <t} \epsilon^{(r)} \leq \epsilon^{(t)} \triangleq R_{\alpha,1}$, $\max_{0 \leq r \leq t} \epsilon^{(r)} \leq R_{\alpha, 2}$ for some $\alpha>1$, the KSD between $\mathcal{Q}_{n,m}^{(r)}$ and target $\Pi$ can be bounded by the KL divergence between $\mathcal{Q}_{\infty}^{(0)}$ and $\Pi$ as follows:
    \begin{equation} \label{thm3pf_cor1shi}
     \begin{split}
        \sum_{r=0}^t \xi^{(r)} (\text{KSD}_{\Pi}(\mathcal{Q}_{n,m}^{(r)}, \Pi) - (a^{(t-1)} +\mathcal{O}(m^{-1/2})))^2 & \leq \sum_{r=0}^t \xi^{(r)} \text{KSD}_{\Pi}(\mathcal{Q}_{\infty}^{(r)}, \Pi)^2\\
        & \leq \frac{2}{R_{\alpha, 1} + b^{(t-1)}} KL(\mathcal{Q}_{\infty}^{(0)} \| \Pi),
     \end{split}  
    \end{equation}
   where $R_{\alpha, p} \triangleq \min \left(\frac{p}{\kappa_1^2(L+\alpha^2)}, (\alpha-1)(1+Lm_{\mathcal{Q}_n^{(0)}, \boldsymbol{\theta}^*}+2L\sqrt{\frac{2}{\lambda}KL(\mathcal{Q}_{\infty}^{(0)} \| \Pi)})\right) \text{for } p \in \{1,2\}, \lambda>0$, and $\xi^{(t)} \triangleq \frac{c(\epsilon^{(t)})}{\sum_{r=0}^t c(\epsilon^{(t)})}$ for $c(\epsilon) \triangleq \epsilon\left(1-\frac{\kappa_1^2(L+\alpha^2)}{2} \epsilon\right)$.
    By Jensen's inequality, we have
    \begin{equation} \label{thm3pf_jensen}
     \begin{split}
      &\sum_{r=0}^t \xi^{(r)} (\text{KSD}_{\Pi}(\mathcal{Q}_{n,m}^{(r)}, \Pi) - (a^{(t-1)} +\mathcal{O}(m^{-1/2})))^2\\
      &\geq \left(\sum_{r=0}^t \xi^{(r)} \text{KSD}_{\Pi}(\mathcal{Q}_{n,m}^{(r)}, \Pi)- \sum_{r=0}^t \xi^{(r)} (a^{(t-1)} +\mathcal{O}(m^{-1/2}))\right)^2. 
     \end{split}
    \end{equation}
     Combining \eqref{thm3pf_cor1shi} and \eqref{thm3pf_jensen}, we have
     $$\sum_{r=0}^t \xi^{(r)} \text{KSD}_{\Pi}(\mathcal{Q}_{n,m}^{(r)}, \Pi) \leq \sum_{r=0}^t \xi^{(r)}(a^{(t-1)} +\mathcal{O}(m^{-1/2})) + \sqrt{\frac{2}{R_{\alpha,1}+b^{(t-1)}}KL(\mathcal{Q}_n^{(0)} \|\Pi)}.$$
    Given $\sum_{r=0}^t \xi^{(r)}(a^{(t-1)} +\mathcal{O}(m^{-1/2})) \leq \max_{0 \leq r \leq t} (a^{(t-1)} +\mathcal{O}(m^{-1/2})) = a^{(t-1)} +\mathcal{O}(m^{-1/2})$, the proof is concluded.

Let $(\overline{w}, \overline{A}, \overline{B}, \overline{C})$ be any upper bounds on $(\max(w_{m,n}, w_{n,\infty}), A, B, C)$ respectively, and define the growth functions
  $$ \phi(w) \triangleq \log \log \left(e^e +\frac{1}{w}\right), \quad  \zeta_{\overline{B}, \overline{C}} (x,y,\beta) \triangleq \frac{1}{\overline{C}} \log \left(\frac{1}{\overline{B}} \max \left(\overline{B} , \frac{1}{\beta} \log \frac{1}{x} -y\right)\right).$$ And the sum of step size $b^{(t-1)} = \sum_{r=0}^{t-1} \epsilon^{(t)} = s_n^*$ for
  $$s_n^* \triangleq \min(\zeta_{\overline{B}, \overline{C}} (\overline{w} \sqrt{\phi(\overline{w})}, \overline{A}, \beta_1), \zeta_{\overline{B}, \overline{C}} (\overline{w} \phi(\overline{w}), \overline{A}+2\overline{C}, \beta_2))=\min(b_1^{(t-1)}, b_2^{(t-1)}),$$
  $$\beta_1 \triangleq \max (1, \zeta_{\overline{B}, \overline{C}} (\overline{w} \sqrt{\phi(\overline{w})}, \overline{A}, 1)), \quad \beta_2 \triangleq \max(1, \zeta_{\overline{B}, \overline{C}} (\overline{w} \phi(\overline{w}), \overline{A}+2\overline{C}, 1)).$$ Let $b^{(t-1)}_1=\zeta_{\overline{B}, \overline{C}}(\overline{w} \sqrt{\phi\left(\overline{w}\right)}, \overline{A}, \beta_{1})$ and $b^{(t-1)}_2=\zeta_{\overline{B}, \overline{C}}\left(\overline{w} \phi\left(\overline{w}\right), \overline{A}+2 \overline{C}, \beta_{2}\right)$ such that $b^{(t-1)}=\min \{b^{(t-1)}_1, b^{(t-1)}_2\}$. Since $\beta_{1}$, $\beta_{2}$, and $\phi\left(\overline{w}\right)$ are greater than or equal to 1, we have
\begin{align*}
\beta_{1} &= \max \left\{1, \frac{1}{C} \log \left(\frac{1}{B}\left(\log \frac{1}{\overline{w} \sqrt{\phi\left(\overline{w}\right)}}-\overline{A}\right)\right)\right\}\\
&\leq \max \left\{1, \frac{1}{C} \log \left(\frac{1}{B}\left(\log \frac{1}{\overline{w} \sqrt{\phi\left(\overline{w}\right)}}\right)\right)\right\} \\
& \leq \max \left\{1, \frac{1}{C} \log \left(\frac{1}{B}\left(\log \frac{1}{\overline{w}}\right)\right)\right\},
\end{align*}
and
\begin{align*}
\beta_{2} &=\max \left\{1, \frac{1}{C} \log \left(\frac{1}{B}\left(\log \frac{1}{\overline{w} \phi\left(\overline{w}\right)}-\overline{A}-2 \overline{C}\right)\right)\right\}\\ 
&\leq \max \left\{1, \frac{1}{C} \log \left(\frac{1}{B}\left(\log \frac{1}{\overline{w} \phi\left(\overline{w}\right)}\right)\right)\right\} \\
& \leq \max \left\{1, \frac{1}{C} \log \left(\frac{1}{B}\left(\log \frac{1}{\overline{w}}\right)\right)\right\}.
\end{align*}
Since $\phi\left(\overline{w}\right) \geq 1$, the lower bounds of $b^{(t-1)}_1$ and $b^{(t-1)}_2$ can be given by
\begin{equation} \label{stepsize_lowerbound}
    \begin{split}
        & b^{(t-1)}_1\geq \frac{1}{\overline{C}} \log \left(\frac{1}{\overline{B}}\left(\frac{\log \frac{1}{\overline{w} \phi\left(\overline{w}\right)}}{\max \left\{1, \frac{1}{\overline{C}} \log \left(\frac{1}{\overline{B}}\left(\log \frac{1}{\overline{w}}\right)\right)\right\}}-\overline{A}-2 \overline{C}\right)\right) \\
&b^{(t-1)}_2\geq \frac{1}{\overline{C}} \log \left(\frac{1}{\overline{B}}\left(\frac{\log \frac{1}{\overline{w} \phi\left(\overline{w}\right)}}{\max \left\{1, \frac{1}{\overline{C}} \log \left(\frac{1}{\overline{B}}\left(\log \frac{1}{\overline{w}}\right)\right)\right\}}-\overline{A}-2 \overline{C}\right)\right).
    \end{split}
\end{equation}

We derive the error bound by considering two distinct cases for the range of $b^{(t-1)}$. If $b^{(t-1)}=0$, the error bound follows directly from Theorem~1 of this manuscript.

If $b^{(t-1)}>0$, Lemma 6 of \cite{shi2024finite} implies that $b^{(t-1)}_1 \leq \beta_{1}$ and
$$
b^{(t-1)}_1 \leq \psi_{\overline{B}, \overline{C}}\left(\overline{w} \sqrt{\phi\left(\overline{w}\right)}, \overline{A}, b^{(t-1)}_1\right)
$$
since $\zeta_{\overline{B}, \overline{C}}$ is non-increasing.
Rearranging the terms and noting that
$$
\overline{B}<\frac{1}{\beta_{1}} \log \frac{1}{\overline{w} \sqrt{\phi\left(\overline{w}\right)}}-\overline{A} \leq \frac{1}{b^{(t-1)}_1} \log \frac{1}{\overline{w} \sqrt{\phi\left(\overline{w}\right)}}-\overline{A}
$$
and $b^{(t-1)}_1 \geq b^{(t-1)}>0$, we obtain
\begin{equation*}
\overline{w} \exp \left(b^{(t-1)}_1\left(\overline{A}+\overline{B} \exp \left(\overline{C} b^{(t-1)}_1\right)\right)\right) \leq \frac{1}{\sqrt{\phi\left(\overline{w}\right)}}. 
\end{equation*}
Similarly, we have $b^{(t-1)}_2 \leq \beta_2$ and the following inequalities hold:
\begin{equation*}
\sqrt{w_{m,n}} \exp \left(b^{(t-1)}_2\left(2 \overline{C}+\overline{A}+\overline{B} \exp \left(\overline{C} b^{(t-1)}_2\right)\right) / 2\right) \leq \frac{1}{\sqrt{\phi\left(\overline{w}\right)}}, 
\end{equation*}
\begin{equation*}
\sqrt{w_{n,\infty}} \exp \left(b^{(t-1)}_2\left(2 \overline{C}+\overline{A}+\overline{B} \exp \left(\overline{C} b^{(t-1)}_2\right)\right) / 2\right) \leq \frac{1}{\sqrt{\phi\left(\overline{w}\right)}}. 
\end{equation*}
Since $b^{(t)}=\min \{b^{(t-1)}_1, b^{(t-1)}_2\}$, the inequalities hold when substituting $b^{(t)}$ for $b^{(t-1)}_1$ and $b^{(t-1)}_2$. Given that the error term $a^{(t)}$ is non-decreasing with respect to each of $\left( w_{m,\infty}, A, B, C\right)$, we conclude that
$$
a^{(t)} \leq\left(\kappa_{1} L+\kappa_{2} d+\kappa_{1} d^{1 / 4} L \sqrt{2 M_{\mathcal{Q}_\infty^{(0)}}}\right) / \sqrt{\phi\left(\overline{w}\right)}.
$$
Under the conditions
$$e^{-(\overline{B}+\overline{A}+2\overline{C})}> \overline{w} \phi (\overline{w}) \quad \text{and} \quad \overline{B}^{(\overline{B}+\overline{A}+2\overline{C})/\overline{C}} > \overline{w} \phi(\overline{w})(\log(1/\overline{w}))^{(\overline{B}+\overline{A}+2\overline{C})/\overline{C}},$$ which hold for a sufficiently small $\overline{w}$, we obtain $$\min_{0 \leq r \leq t} \text{KSD}_{\Pi}(\mathcal{Q}_{n,m}^{(t)}, \Pi) \leq \mathcal{O}\left(\left(\log \log (e^e + \frac{1}{\overline{w}})\right)^{-1/2}\right)+\mathcal{O}(m^{-1/2}).$$
According to Theorem 3.1 of \cite{lei2020convergence} and Markov's inequality, we have
$$W_1(\mathcal{Q}_{n,m}^{(0)}, \mathcal{Q}_{\infty}^{(0)}) \leq \mathbb{E}\left[W_1(\mathcal{Q}_{n,m}^{(0)}, \mathcal{Q}_{\infty}^{(0)})\right]/(c\delta) \leq \frac{M_{\mathcal{Q}_\infty^{(0)}} \log(n) ^ {\mathbb{I}\left[d=2\right]}}{\delta n^{1/\max(2,d)}} $$
with probability at least $1-c\delta$ for a constant $c>0$.
Similarly, $W_1(\mathcal{Q}_{n}^{(0)}, \mathcal{Q}_{\infty}^{(0)})$ has the same upper bound with probability at least $1-c\delta$ for a constant $c>0$. By the triangle inequality, we derive $$W_1(\mathcal{Q}_{n,m}^{(0)}, \mathcal{Q}_{n}^{(0)}) \leq \frac{2M_{\mathcal{Q}_\infty^{(0)}} \log(n) ^ {\mathbb{I}\left[d=2\right]}}{\delta n^{1/\max(2,d)}}.$$ Hence, with this choice of $\overline{w}\triangleq \frac{2M_{\mathcal{Q}_\infty^{(0)}} \log(n) ^ {\mathbb{I}\left[d=2\right]}}{\delta n^{1/\max(2,d)}},$
$$\min_{0 \leq r \leq t} \text{KSD}_{\Pi} (\mathcal{Q}_{n,m}^{(t)}, \Pi) = \mathcal{O}((\log \log (n))^{-1/2}) + \mathcal{O}(m^{-1/2})$$
with probability at least $1-c\delta$.  
\end{proof}

For the sum of the step size sequence, we have $s_n^*= \min(b_1^{(t-1)}, b_2^{(t-1)}) \leq \min(\beta_1, \beta_2)$ from the proof of Theorem~1, where the smallest possible values for $\beta_1$ and $\beta_2$ are 1. 

\section{Sensitivity Analysis}
\label{appendix_b}
To provide practical guidelines for implementation, we conduct a sensitivity analysis on the following tuning components: (1) the number of particles ($n$), (2) the number of Monte Carlo samples ($m$), (3) the ESS threshold, and (4) the number of algorithm iterations. The experiments are conducted using the exponential random graph models (ERGMs) presented in the manuscript, which is the most challenging of the discussed examples due to its high-dimensional parameter space.


\newpage
\subsection{Number of Particles}
Experiments are conducted based on the observation that higher parameter dimensions require a greater number of particles. We test the number of particles per dimension using $10d$, $30d$, and $50d$ particles, where the dimension $d=10$ for the ERGM example. The corresponding computation times are 0.81, 4.47, and 10.75 minutes, respectively. We treat the sample from the DMH as the gold standard. Figure~\ref{fig:sensitiv_par} shows MC-SVGD exhibit a noticeable deviation from DMH when $n=10d$, whereas it produces results comparable to DMH when $n\geq 30d$.
\begin{figure}[htbp]
     \centering
     \begin{figure}
         \centering
         \includegraphics[width=\textwidth]{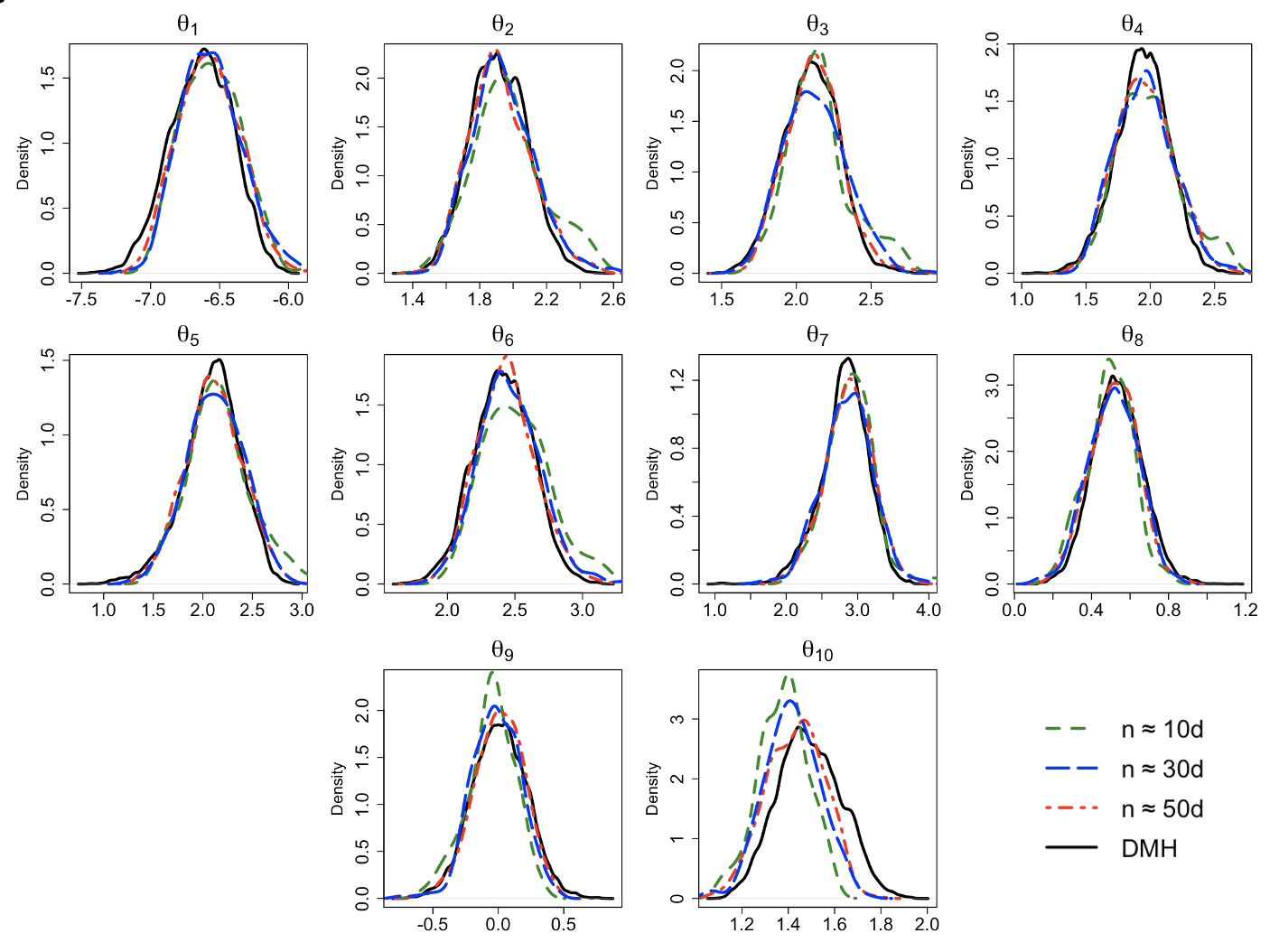}
     \end{figure}
        \caption{Posterior density estimates for the parameters of the ERGM model on the Faux Mesa high school network dataset with various numbers $n$ of particles.}
        \label{fig:sensitiv_par}
\end{figure}

\subsection{Number of Monte Carlo Samples}
Experiments are carried out with a range of different MC sample sizes. 
Specifically, we consider 30, 50, and 100 for the MC sample size $m$. The computation times for these cases are 1.75, 4.47, and 26.14 minutes, respectively. Figure~\ref{fig:sensitiv_mcsamples} demonstrates that the results are consistent across different values of $m$, with all showing similarity to DMH.
\begin{figure}[htbp]
     \centering
     \begin{figure}
         \centering
         \includegraphics[width=\textwidth]{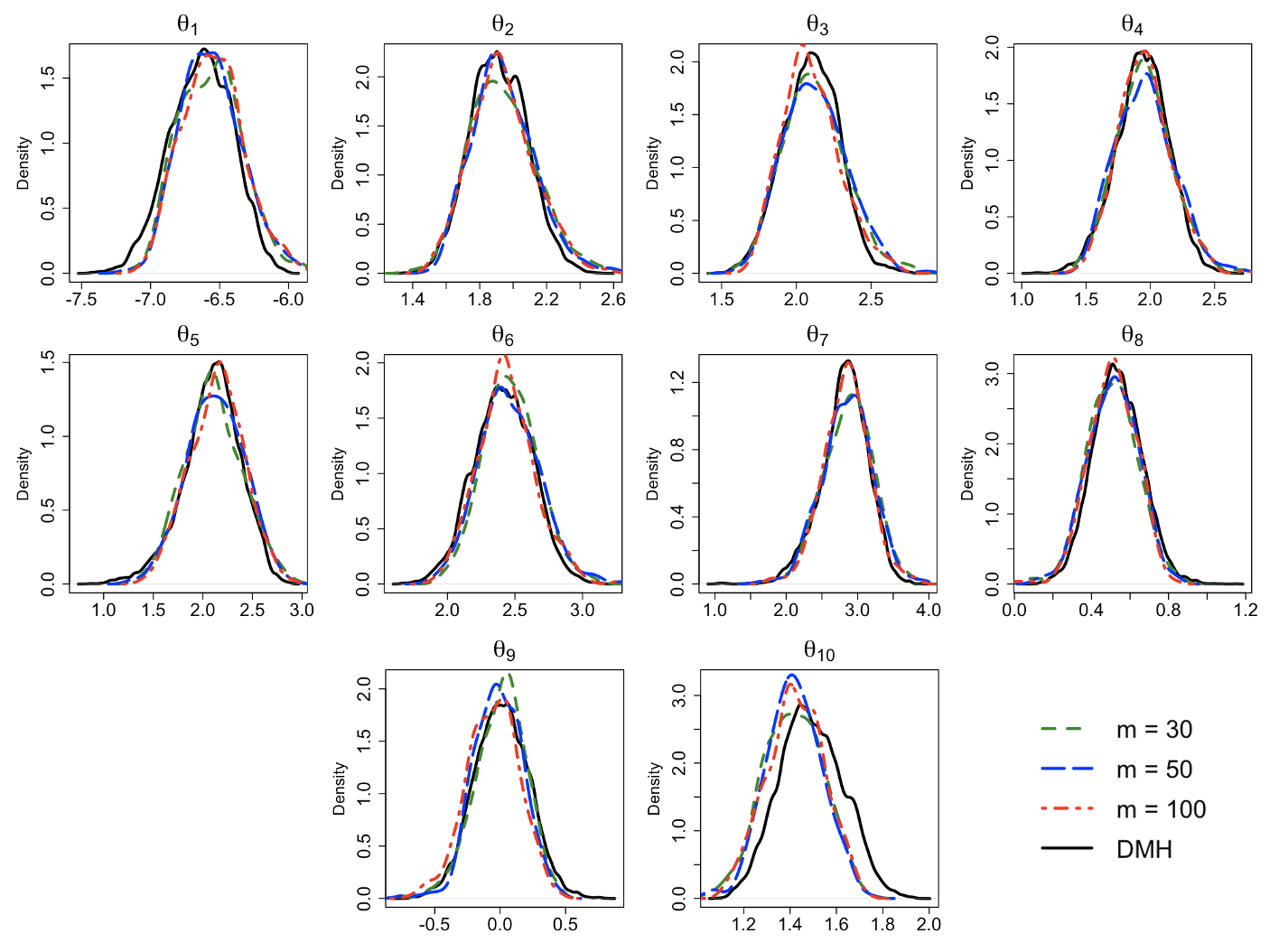}
     \end{figure}
        \caption{Posterior density estimates for the parameters of the ERGM model on the Faux Mesa high school network dataset with various choices $m$ of Monte Carlo sample size.}
        \label{fig:sensitiv_mcsamples}
\end{figure}

\subsection{ESS Threshold}
Here we perform a sensitivity analysis on the ESS threshold, beginning with $m/3$, as suggested by \cite{tan2020bayesian}, and exploring higher (more conservative) threshold values. The thresholds examined include $m/3$, $m/2$, and $m/1.5$. The associated computation times are 1.96, 2.95, and 4.47 minutes, respectively. Figure~\ref{fig:sensitiv_threshold} illustrates that the results are generally consistent across the threshold values, except for $\theta_2$ and $\theta_6$, for which higher threshold values yield results that are more similar to DMH.
\begin{figure}[htbp]
     \centering
     \begin{figure}
         \centering
         \includegraphics[width=\textwidth]{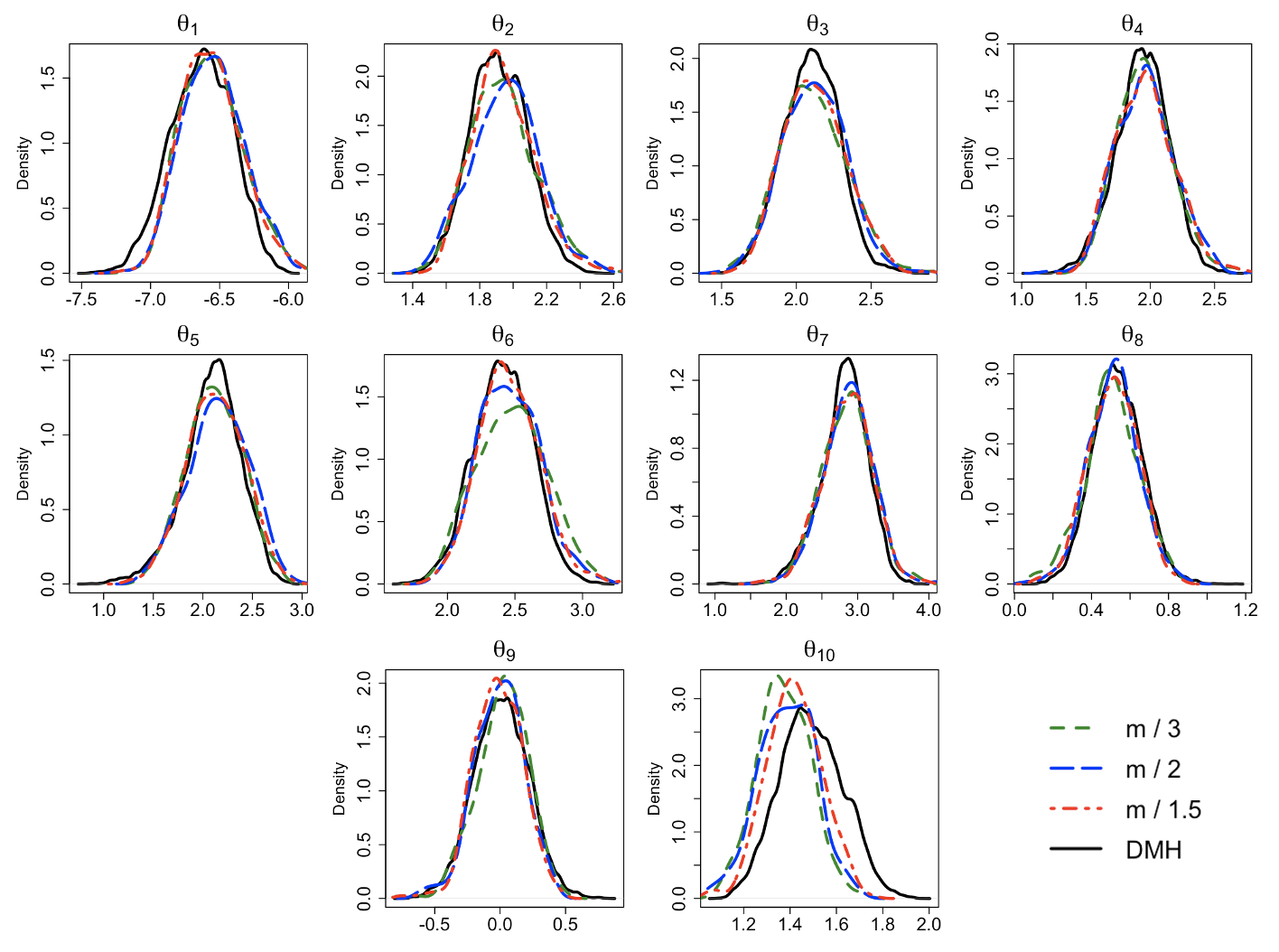}
     \end{figure}
        \caption{Posterior density estimates for parameters of the ERGM model on the Faux Mesa high school network dataset with different values of threshold.}
        \label{fig:sensitiv_threshold}
\end{figure}

\subsection{Number of Iterations}
To assess the number of iterations needed for our MC-SVGD to achieve convergence, we compare results produced by the algorithm with varying numbers $T$ of iterations. 
We specifically consider 100, 300, 500, and 1000 for $T$. The corresponding computation times are 0.57, 2.35, 4.47, and 7.61 minutes, respectively. Figure~\ref{fig:sensitiv_iter} shows that the posterior density estimates remain consistent for $T > 500$.
\begin{figure}[htbp]
     \centering
     \begin{figure}
         \centering
         \includegraphics[width=\textwidth]{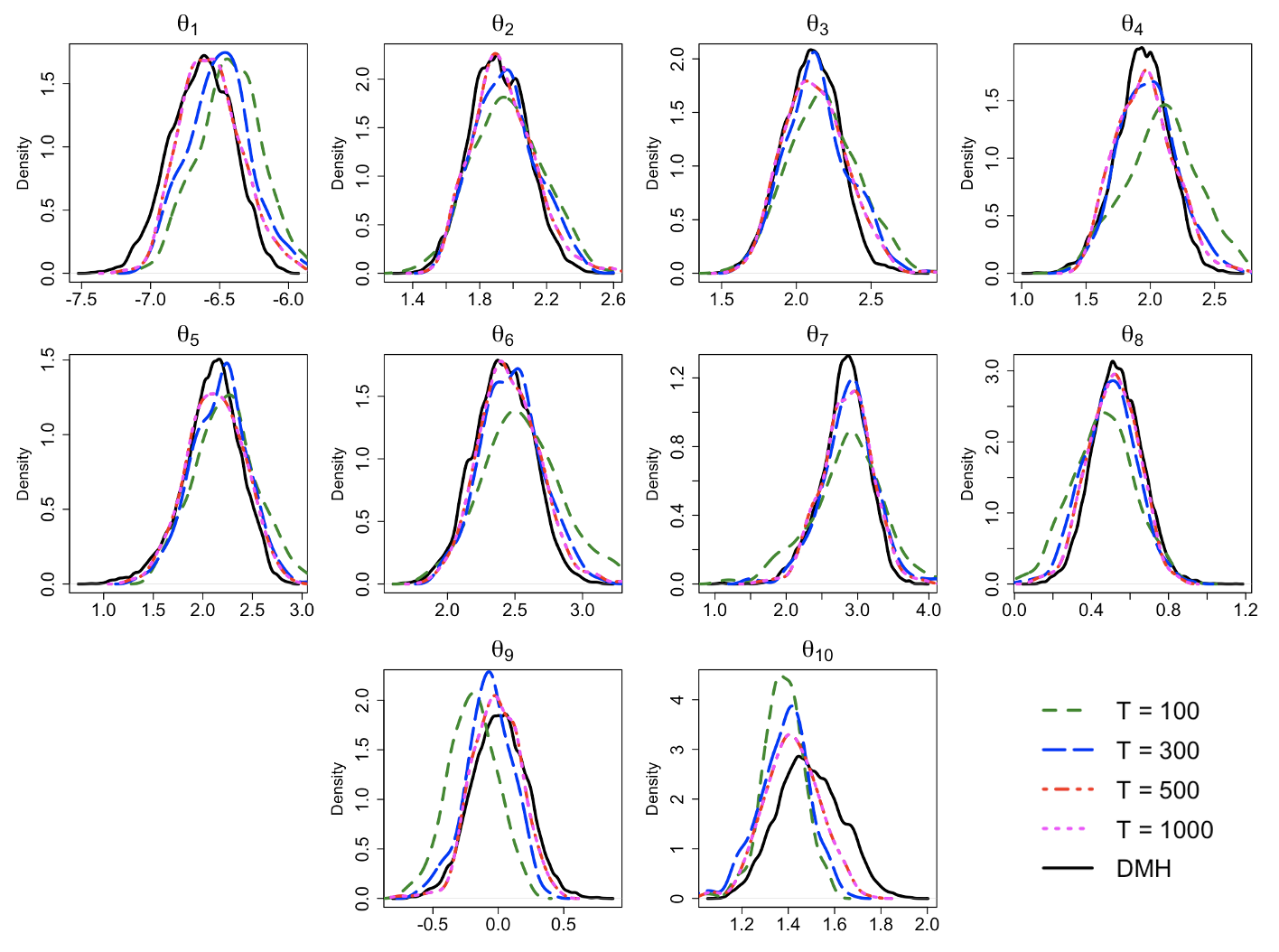}
     \end{figure}
        \caption{Posterior density estimates of parameters for the ERGM model on the Faux Mesa high school network dataset with increasing numbers of iterations.}
        \label{fig:sensitiv_iter}
\end{figure}

To determine an appropriate stopping point for the algorithm, we recommend calculating the KL divergence between $\mathcal{Q}_{n,m}^{(t)}$ and $\mathcal{Q}_{n,m}^{(t-100)}$ at every 100th iteration and halting the algorithm when there are no significant changes in the divergence. Figure~\ref{fig:kl_divergence} shows the KL divergence as the number of iterations increases. The divergence values decrease sharply up to iteration 300 and show minimal change beyond iteration 500, which is consistent with Figure~\ref{fig:sensitiv_iter}.

\begin{figure}[htbp]
     \centering
     \begin{figure}
         \centering
         \includegraphics[width=0.5\textwidth]{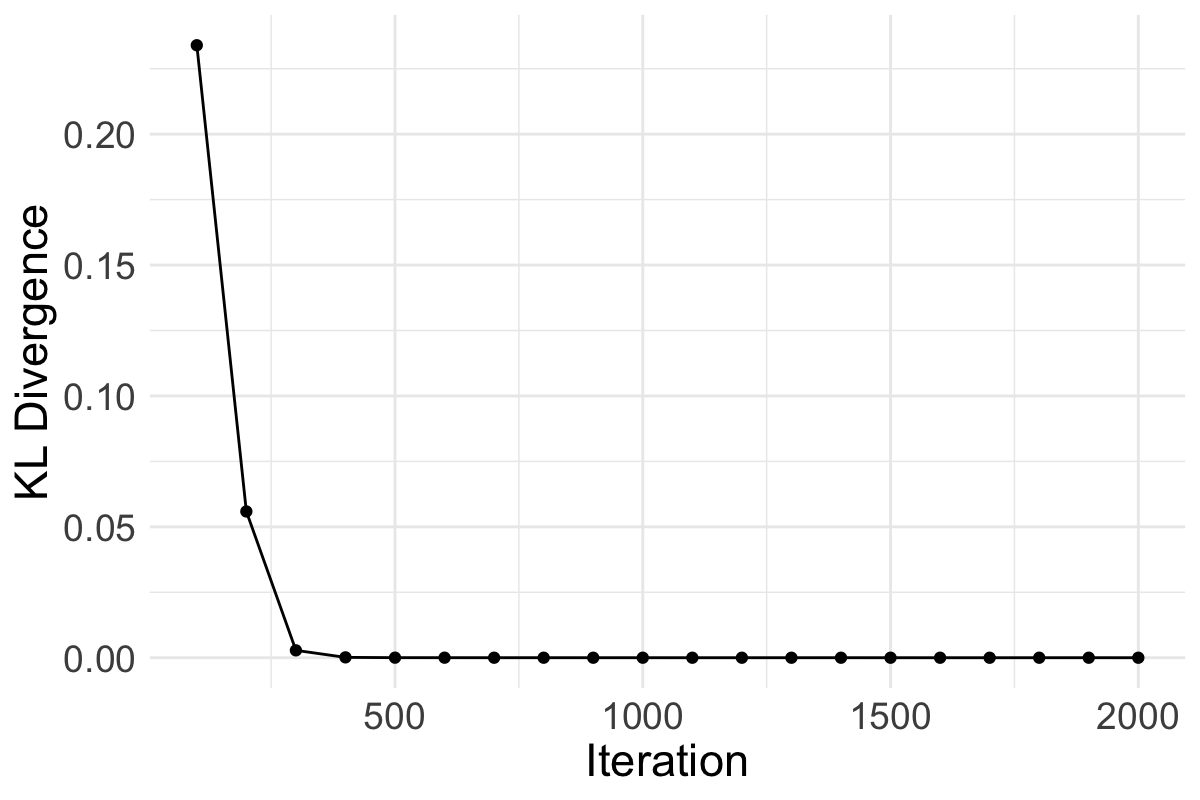}
     \end{figure}
        \caption{KL divergence between sample distributions}
        \label{fig:kl_divergence}
\end{figure}

\section{Derivations for the Gradient}
\label{appendix_c}
In this section, we provide derivations for the gradient of applications in Section~4.
\subsection{A Potts Model}
\begin{align*}
    L(\theta | \textbf{x}) &= \frac{1}{Z(\theta)} \exp\left\{\theta S(\textbf{x})\right\},\\
    h(\textbf{x} | \theta) &= \exp\left\{\theta S(\textbf{x})\right\},\\
    \log h(\textbf{x} | \theta) &= \theta S(\textbf{x}), \\
    \nabla_{\theta} \log h(\textbf{x} | \theta) &= S(\textbf{x}).
\end{align*}

\subsection{A Conway--Maxwell--Poisson Regression Model}
\begin{align*}
    L(\boldsymbol{\theta}|\textbf{y}) &= \frac{\exp \left\{ \nu \sum_{j=1}^{p} \beta_j S_j(\textbf{y}) - \nu S_{p+1}(\textbf{y})\right\}}{\prod_{i=1}^n Z(\eta_i, \nu)},\\
    h(\boldsymbol{\theta} | \textbf{y}) &= \exp \left\{ \nu \sum_{j=1}^{p} \beta_j S_j(\textbf{y}) - \nu S_{p+1}(\textbf{y})\right\},\\
    \log h(\boldsymbol{\theta} | \textbf{y}) &= \nu \sum_{j=1}^{p} \beta_j S_j(\textbf{y}) - \nu S_{p+1}(\textbf{y}),\\
    \nabla_{\beta_j} \log h(\boldsymbol{\theta} | \textbf{y}) &= \nu  S_j(\textbf{y}),\\
    \nabla_{\nu}\log h(\boldsymbol{\theta} | \textbf{y}) &= S_{p+1}(\textbf{y}).
\end{align*}

\subsection{An Exponential Random Graph Model}
\begin{align*}
    L(\boldsymbol{\theta} | \textbf{x}) &= \frac{1}{Z(\boldsymbol{\theta})} \exp\left\{\boldsymbol{\theta} S(\textbf{x})\right\},\\
    h(\textbf{x} | \boldsymbol{\theta}) &= \exp\left\{\boldsymbol{\theta} S(\textbf{x})\right\},\\
    \log h(\textbf{x} | \boldsymbol{\theta}) &= \boldsymbol{\theta} S(\textbf{x}), \\
    \nabla_{\boldsymbol{\theta}} \log h(\textbf{x} | \boldsymbol{\theta}) &= S(\textbf{x}).
\end{align*}

\section{Extra Results}
\label{appendix_d}
In this section, we present the results for parameters that are not provided in the manuscript. 
\subsection{A Conway--Maxwell--Poisson Regression Model}

\begin{table}[htbp]
  \centering
    \begin{tabular}{M{10mm}M{20mm}M{20mm}M{25mm}M{25mm}M{25mm}} 
    \toprule
    \textbf{N} & \textbf{Method} & \textbf{Report} & \multicolumn{1}{M{25mm}}{$\theta_1$} &\multicolumn{1}{M{25mm}}{$\theta_2$} &\multicolumn{1}{M{25mm}}{$\theta_3$}\\ \midrule
    
    \multirow{6}{*}{15} & \multirow{3}{*}{Exchange} & Mean  &  1.05 & 1.05 & 0.04   \\ 
    & & 95$\%$HPD  &(0.87, 1.25)&(0.86, 1.24)& (-0.12, 0.21)\\
    & & Time (sec) &30& &  \\ \cmidrule{2-6}
    
    & \multirow{3}{*}{MC-SVGD} & Mean  &  1.05 & 1.05 & 0.04\\ 
    & & 95$\%$HPD  &(0.85, 1.21)&(0.89, 1.24)& (-0.15, 0.19) \\
    & & Time (sec) &5.08& &  \\ \midrule
    
    \multirow{6}{*}{20} & \multirow{3}{*}{Exchange} & Mean  &  0.93 & 0.98 & 0.17   \\ 
    & & 95$\%$HPD  &(0.79, 1.07)&(0.84, 1.13)& (0.04, 0.30)\\
    & & Time (sec) &51.6& &  \\ \cmidrule{2-6}
    
    & \multirow{3}{*}{MC-SVGD} & Mean  & 0.93 & 0.98 & 0.17\\ 
    & & 95$\%$HPD  &(0.77, 1.05)&(0.81, 1.10)& (0.04, 0.29) \\
    & & Time (sec) &7.37& &  \\ \midrule
    
    \multirow{6}{*}{50} & \multirow{3}{*}{Exchange} & Mean  &  1.01 & 0.95 & 0.12   \\ 
    & & 95$\%$HPD  &(0.96, 1.08)&(0.89, 1.01)& (0.06, 0.17)\\
    & & Time (sec) &258& &  \\ \cmidrule{2-6}
    
    & \multirow{3}{*}{MC-SVGD} & Mean  &  1.01 & 0.95 & 0.12 \\ 
    & & 95$\%$HPD  &(0.96, 1.07)&(0.89, 0.99)& (0.07, 0.16) \\
    & & Time (sec) &38.56& &  \\ \bottomrule
    
    \end{tabular}
    \caption{Posterior inference results of all parameters for the COMP regression model on the simulated data with increasing data size $N$. The simulated true parameter is $\boldsymbol{\theta} = (1,1,0.1)$
    }
    \label{table:comp_all}
\end{table}

\newpage
\subsection{An Exponential Random Graph Model}
\begin{table}[htbp]
  \centering
    \begin{tabular}{M{24mm}M{24mm}M{22mm}M{22mm}M{22mm}M{22mm}} \toprule
    \textbf{DMH} &\multicolumn{1}{M{20mm}}{$\boldsymbol{\theta}_1$} &\multicolumn{1}{M{20mm}}{$\boldsymbol{\theta}_2$} &\multicolumn{1}{M{20mm}}{$\boldsymbol{\theta}_3$} &\multicolumn{1}{M{20mm}}{$\boldsymbol{\theta}_4$} &\multicolumn{1}{M{20mm}}{$\boldsymbol{\theta}_5$}\\ \midrule
        Mean  &  -6.63 & 1.91& 2.10  & 1.94 &2.09  \\ 
        95$\%$HPD  &(-7.06, -6.20)&(1.58, 2.25)& (1.72, 2.44) & (1.51, 2.32) &(1.50, 2.63)\\
\cmidrule(l){2-6}
\textbf{} &\multicolumn{1}{M{20mm}}{$\boldsymbol{\theta}_6$} &\multicolumn{1}{M{20mm}}{$\boldsymbol{\theta}_7$} &\multicolumn{1}{M{20mm}}{$\boldsymbol{\theta}_8$} &\multicolumn{1}{M{20mm}}{$\boldsymbol{\theta}_9$} &\multicolumn{1}{M{20mm}}{$\boldsymbol{\theta}_{10}$}\\ \cmidrule(l){2-6}
        Mean  &  2.41&2.81& 0.53  & 0.01 &1.49  \\ 
        95$\%$HPD  &(2.00, 2.84)&(2.13, 3.42)& (0.28, 0.78) & (-0.40, 0.42) &(1.23, 1.75)\\
        Time (min) &71.85& &  &  & \\   \toprule
    \textbf{SVI} &\multicolumn{1}{M{20mm}}{$\boldsymbol{\theta}_1$} &\multicolumn{1}{M{20mm}}{$\boldsymbol{\theta}_2$} &\multicolumn{1}{M{20mm}}{$\boldsymbol{\theta}_3$} &\multicolumn{1}{M{20mm}}{$\boldsymbol{\theta}_4$} &\multicolumn{1}{M{20mm}}{$\boldsymbol{\theta}_5$}\\ \midrule
        Mean  &  -6.72&1.93& 2.10  & 1.95 &2.13  \\ 
        95$\%$HPD  &(-7.15, -6.29)&(1.60, 2.27)& (1.73, 2.48) & (1.55, 2.33) &(1.59, 2.67)\\
 \cmidrule(l){2-6}
\textbf{} &\multicolumn{1}{M{20mm}}{$\boldsymbol{\theta}_6$} &\multicolumn{1}{M{20mm}}{$\boldsymbol{\theta}_7$} &\multicolumn{1}{M{20mm}}{$\boldsymbol{\theta}_8$} &\multicolumn{1}{M{20mm}}{$\boldsymbol{\theta}_9$} &\multicolumn{1}{M{20mm}}{$\boldsymbol{\theta}_{10}$}\\ \cmidrule(l){2-6}
        Mean  &  2.43&2.93& 0.55  & 0.07 &1.53  \\ 
        95$\%$HPD  &(2.00, 2.85)&(2.28, 3.57)& (0.29, 0.80) & (-0.34, 0.48) &(1.25, 1.81)\\
        Time (min) &19.83& &   &  & \\   \toprule
        \textbf{MC-SVGD} &\multicolumn{1}{M{20mm}}{$\boldsymbol{\theta}_1$} &\multicolumn{1}{M{20mm}}{$\boldsymbol{\theta}_2$} &\multicolumn{1}{M{20mm}}{$\boldsymbol{\theta}_3$} &\multicolumn{1}{M{20mm}}{$\boldsymbol{\theta}_4$} &\multicolumn{1}{M{20mm}}{$\boldsymbol{\theta}_5$}\\ \midrule
        Mean  & -6.56 & 1.94 & 2.12 & 1.96 & 2.13\\ 
        95$\%$HPD  &(-6.93, -6.11)&(1.64, 2.30)& (1.76, 2.52) & (1.52, 2.34) &(1.58, 2.69)\\
 \cmidrule(l){2-6}
\textbf{} &\multicolumn{1}{M{20mm}}{$\boldsymbol{\theta}_6$} &\multicolumn{1}{M{20mm}}{$\boldsymbol{\theta}_7$} &\multicolumn{1}{M{20mm}}{$\boldsymbol{\theta}_8$} &\multicolumn{1}{M{20mm}}{$\boldsymbol{\theta}_9$} &\multicolumn{1}{M{20mm}}{$\boldsymbol{\theta}_{10}$}\\ \cmidrule(l){2-6}
        Mean  &  2.46 & 2.85 & 0.51 & -0.01 & 1.42\\ 
        95$\%$HPD  &(2.04, 2.90)&(2.13, 3.43)& (0.28, 0.74) & (-0.33, 0.34) &(1.22, 1.66)\\
        Time (min) &4.47& &  &  & \\   \bottomrule
    \end{tabular}
    \caption{Posterior inference results of all parameters for the ERGM model on the Faux Mesa high school network dataset. 
    }
    \label{tab:ergm_all}
\end{table}

\bibliographystyle{apalike}
\bibliography{ref2}
\end{document}